\newcommand{\madry}[0]{M\k{a}dry}
\newtheorem{theorem}{Theorem}
\newtheorem{definition}{Definition}
\newtheorem{lemma}{Lemma}
\newtheorem{example}{Example}
\newtheorem{proposition}{Proposition}
\newcommand{\stacklabel}[1]{\stackrel{\smash{\scriptscriptstyle \mathrm{#1}}}}
\newcommand{\defeq}{\stacklabel{def}=}
\def\final{}
\def\enablecomments{}
\def\enablecomments{}
\definecolor{LightGreen}{rgb}{0.80,1.00,0.80}
\definecolor{LightBlue}{rgb}{0.80,0.80,1.00}
\definecolor{LightRed}{rgb}{1.00,0.80,0.80}
\definecolor{LightPurple}{rgb}{1.00,0.80,1.00}
\definecolor{LightGray}{rgb}{0.90,0.90,0.90}
\soulregister{\method}{7}
\soulregister{\xspace}{7}
\soulregister{\emph}{7}
\soulregister{\cite}{7}
  \DeclareRobustCommand{\commentformat}[3]{\sethlcolor{#2}\textsf{\hl{#1: #3}}}
  \newcommand{\sm}    [1]{{\scriptsize\sethlcolor{LightGray}\hl{\textsf{#1}}}}
  \newcommand{\commentformat}[3]{}
  \newcommand{\sm}    [1]{}
\newcommand{\pxm}   [1]{\commentformat{PM}{LightGreen}{#1}}
\newcommand{\zifan} [1]{\commentformat{ZW}{LightBlue}{#1}}
\title{Smoothed Geometry for Robust Attribution}
\author{%
  Zifan Wang\\
  Electrical and Computer Engineering\\
  Carnegie Mellon University\\
  \texttt{zifan@cmu.edu} \\
  \And
  Haofan Wang\\
  Electrical and Computer Engineering\\
  Carnegie Mellon University\\
\And
  Shakul Ramkumar\\
  Information Networking Institute   \\
  Carnegie Mellon University\\
  \And
    Matt Fredrikson\\
  School of Computer Science \\
  Carnegie Mellon University\\
  \AND
  Piotr Mardziel \\
  Electrical and Computer Engineering\\
  Carnegie Mellon University\\
  \And
    Anupam Datta\\
  Electrical and Computer Engineering \\
  Carnegie Mellon University\\
}
\begin{document}

\maketitle

\begin{abstract}
Feature attributions are a popular tool for explaining the behavior of Deep Neural Networks (DNNs), but have recently been shown to be vulnerable to attacks that produce divergent explanations for nearby inputs. 
This lack of robustness is especially problematic in high-stakes applications where adversarially-manipulated explanations could impair safety and trustworthiness.
Building on a geometric understanding of these attacks presented in recent work, we identify Lipschitz continuity conditions on models' gradients that lead to robust gradient-based attributions, and observe that the \emph{smoothness} of the model's decision surface is related to the transferability of attacks across multiple attribution methods.
To mitigate these attacks in practice, we propose an inexpensive regularization method that promotes these conditions in DNNs, as well as a stochastic smoothing technique that does not require re-training.
Our experiments on a range of image models demonstrate that both of these mitigations consistently improve attribution robustness, and confirm the role that smooth geometry plays in these attacks on real, large-scale models. 
\end{abstract}

\zifan{TODOs for Camera-Ready submission: 1) Update experiment section 2)Update Related Work 3)Prepare Release Code}
\section{Introduction}


Attribution methods map each input feature of a model to a numeric score that quantifies its relative importance towards the model's output.
At inference time, an analyst can view the attribution map alongside its corresponding input to interpret the data attributes that are most relevant to a given prediction.
In recent years, this has become a popular way of explaining the behavior of Deep Neural Networks (DNNs), particularly in domains such as medical imaging~\cite{Arcadu19diabetic} and other safety-critical tasks~\cite{david2018safety} where the opacity of DNNs might otherwise prevent their adoption. 

Recent work has shown that attribution methods are vulnerable to adversarial perturbations~\cite{dombrowski2019explanations, Ghorbani2017InterpretationON, NIPS2019_8558, kindermans2017unreliability, viering2019manipulate}, showing that it is often possible to find a small-norm set of feature changes that yield attribution maps with adversarially-chosen qualities while leaving the model's output behavior intact.
For example, an attacker might introduce visually-imperceptible changes that cause the mapping generated for a medical image classifier to focus attention on an irrelevant region. Such attacks have troubling implications for the continued adoption of attribution methods for explainability in high-stakes settings. 

\noindent\textbf{Contributions}. In this paper, we characterize the vulnerability of attribution methods in terms of the geometry of the targeted model's decision surface.
Restricting our attention to attribution methods that primarily use information from the model's gradients~\cite{simonyan2013deep,smilkov2017smoothgrad,sundararajan2017axiomatic}, we formalize attribution robustness as a local Lipschitz condition on the mapping, and show that certain smoothness criteria of the model ensure robust attributions (Sec.~\ref{sec: robustness}, Theorems~\ref{thm: robustenss of Saliency Map} and \ref{thm: smoothgrad}).
Importantly, our analysis suggests that attacks are less likely to transfer across attribution methods when the model's decision surface is smooth (Sec.~\ref{sec: Transferability of Local Perturbations}), and our experimental results confirm this on real data (Sec.~\ref{exp: 3}). While this phenomenon is widely-known for adversarial examples~\cite{szegedy14intriguing}, to our knowledge this is the first systematic demonstration of it for attribution attacks.

As typical DNNs are unlikely to satisfy the criteria we present, we propose \emph{Smooth Surface Regularization} (SSR) to impart models with robust gradient-based attributions (Sec.~\ref{sec: Towards Robust Attribution}, Def.~\ref{def: SSR}). Unlike prior regularization techniques that aim to mitigate attribution attacks~\cite{chen2019robust}, our approach does not require solving an expensive second-order inner objective during training, and our experiments show that it effectively promotes robust attribution without a significant reduction in model accuracy (Sec.~\ref{exp: 2}).
Finally, we propose a stochastic post-processing method as an alternative to SSR (Sec.~\ref{sec: Towards Robust Attribution}), and validate its effectiveness experimentally on models of varying size and complexity, including pre-trained ImageNet models (Sec.~\ref{exp: 1}).

Taken together, our results demonstrate the central role that model geometry plays in attribution attacks, and that a variety of techniques that promote smooth geometry can effectively mitigate the problem on large-scale, state-of-the-art models.
Proofs for all theorems and propositions in this paper are included in Appendix A and the implementation is available on: \url{https://github.com/zifanw/smoothed_geometry}

\section{Background}
\label{sec: Background}

\label{sec: notations}

We begin with notation, and proceed to introduce the attribution methods considered throughout the paper, as well as and the attacks that target them.
Let $\arg\max_c f_c(\mathbf{x}) = y$ be a DNN that outputs a predicted class $y$ for an input $\mathbf{x} \in \mathbb{R}^d$. Unless stated otherwise, we assume that $f$ is a feed-forward network with rectified-linear (ReLU) activations.

\paragraph{Attribution methods.}
An attribution $\mathbf{z} = g(\mathbf{x}, f)$, indicates the importance of features $\mathbf{x}$ towards a \emph{quantity of interest}~\cite{leino2018influencedirected} $ f $, which for our purposes will be the pre- or post-softmax score of the model's predicted class $ y $ at $\mathbf{x}$.
When $f$ is clear from the context, we write $g(\mathbf{x})$. 
We also denote $\triangledown_\mathbf{x} f(\mathbf{x})$ as the gradient of $ f $ w.r.t the input. Throughout the paper we focus on the following gradient-based attribution methods.
\begin{definition}[Saliency Map (SM)~\cite{simonyan2013deep}]
	\label{def: saliency map}
	Given a model $f(\mathbf{x})$, the Saliency Map for an input $\mathbf{x}$ is defined as $g(\mathbf{x})=\triangledown_\mathbf{x} f(\mathbf{x})$.
\end{definition}

\begin{definition}[Integrated Gradients (IG)~\cite{sundararajan2017axiomatic}]
	\label{def: integrated gradient}
	Given a model $f(\mathbf{x})$, a user-defined baseline input $\mathbf{x}_b$, the Integrated Gradient is the path integral defined as $g(\mathbf{x})=(\mathbf{x}-\mathbf{x}_b) \circ \int^1_0\triangledown_\mathbf{r} f(\mathbf{r}(t))dt$ where $\mathbf{r}(t) = \mathbf{x}_b + (\mathbf{x}-\mathbf{x}_b)t$ and $\circ$ is Hadamard product.
\end{definition}

\begin{definition}[Smooth Gradient (SG)~\cite{smilkov2017smoothgrad}] 
	\label{def: smooth gradient}
	Given $f(\mathbf{x})$ and a user-defined variance $\sigma$, the Smooth Gradient is defined as $g(\mathbf{x})=\mathbb{E}_{\mathbf{z}\sim\mathcal{N}(\mathbf{x}, \sigma^2\mathbf{I})} \triangledown_\mathbf{z} f(\mathbf{z})$.
\end{definition}

We focus our attention on the above three methods as they are widely available, e.g., as part of Pytorch's Captum~\cite{captum2019github} API, and they work across a broad range of architectures---a property that many other methods (e.g., DeepLIFT~\cite{shrikumar2017learning}, LRP~\cite{alex2016layerwise}, and various CAM-based methods~\cite{Omeiza2019SmoothGA, selvaraju2016gradcam, wang2020score, zhou2015learning}) do not satisfy. 
We exclude Guided Backpropogation~\cite{springenberg2014striving} as it may not be sensitive to the model~\cite{adebayo2018sanity}, as well as perturbation methods~\cite{ petsiuk2018rise, schulz2020restricting,pmlr-v97-singla19a, zeiler2013visualizing}, as they have not been the focus of prior attacks.

\paragraph{Attacks.}
Similar to \emph{adversarial examples}~\cite{goodfellow2014explaining}, recent work demonstrates that gradient-based attribution maps are also vulnerable to small distortions~\cite{kindermans2017unreliability}. We refer to an attack that tries to modify the original attribution map with properties chosen by the attacker as \emph{attribution attack}. 
\begin{definition}[Attribution attack]
	\label{def: attribution attack}
	A an attribution method $g$ for model $f$ is vulnerable to attack at input $\mathbf{x}$ if there exists a perturbation $\boldsymbol{\epsilon}$, $||\boldsymbol{\epsilon}|| \le \delta_p$, where $g(\mathbf{x}, f)$ and $g(\mathbf{x}+\boldsymbol{\epsilon})$ are dissimilar but the model's prediction remains unchanged.
	An attacker generates $\boldsymbol{\epsilon}$ by solving Equation~\ref{eq: attribution attack}:
	\begin{equation}
		\label{eq: attribution attack}
		\min_{||\boldsymbol{\epsilon}||_p \leq \delta} \mathcal{L}_g(\mathbf{x}, \mathbf{x}+\boldsymbol{\epsilon}) \quad s.t. \arg\max_c f_c(\mathbf{x}) = \arg\max_c f_c(\mathbf{x}+\boldsymbol{\epsilon})
	\end{equation} where $\mathcal{L}_g$ is an attacker-defined loss measuring the similarity between $g(\mathbf{x})$ and $g(\mathbf{x}+\boldsymbol{\epsilon})$.
\end{definition}
To perform an attribution attack for ReLU networks, a common technique is to replace ReLU activations with an approximation whose second derivatives are non-zero,  $\text{S}(\mathbf{x}) \defeq \beta^{-1}[1 + \exp(\beta\mathbf{x})]$. 
Ghorbani et al.~\cite{Ghorbani2017InterpretationON} propose the \emph{top-k}, \emph{mass-center}, and \emph{targeted} attacks, each characterized by different $\mathcal{L}_g$. 
As an improvement to the targeted attack, the \emph{manipulate} attack~\cite{dombrowski2019explanations} adds a constraint to $\mathcal{L}_g$ that promotes similarity of the model's output behavior between the original and perturbed inputs, beyond prediction.

\newcommand{\xvec}{\mathbf{x}}
\newcommand{\xgrad}{\triangledown_\mathbf{x}f(\mathbf{x})}
\newcommand{\xcgrad}{\triangledown_\mathbf{x'}f(\mathbf{x}')}
\newcommand{\eig}{\text{eig}}

\section{Characterization of Robustness}

In this section, we first characterize the robustness of attribution methods to these attacks geometrically, primarily in terms of Lipschitz continuity.
In Section~\ref{sec: Robustness of Stochastic Smoothing}, we build on this characterization to show why, and under what conditions, the Smooth Gradient is more robust than a Saliency Map. 
Finally, Section~\ref{sec: Transferability of Local Perturbations} discusses the transferability of attribution attacks, leveraging this geometric understanding of vulnerability to shed light on the conditions that make it more likely to succeed.

\label{sec: robustness}
\paragraph{Robust Attribution.}
The attacks described by Definition \ref{def: attribution attack} can all be addressed by ensuring that the attribution map remains stable around the input $\mathbf{x}$, which motivates the use of Lipschitz continuity (Definition~\ref{def: lipschitz}) to measure attribution robustness~\cite{NIPS2018_8003, alvarezmelis2018robustness} (Definition~\ref{def: robustness}).
Unless otherwise noted, we assume that $p=2$ when referring to Definitions \ref{def: lipschitz} and \ref{def: robustness}. 

\begin{definition}[Lipschitz Continuity] 
	\label{def: lipschitz}
	A general function $h: \mathbb{R}^{d_1} \rightarrow \mathbb{R}^{d_2}$ is $(L, \delta_p)$-locally Lipschitz continuous if $\forall \mathbf{x}' \in B(\mathbf{x}, \delta_p), ||h(\mathbf{x}) - h(\mathbf{x}')||_p \leq L||\mathbf{x} - \mathbf{x}'||_p$. 
	Similarly, $h$ is $L$-globally Lipschitz continuous if $\forall \mathbf{x}' \in \mathbb{R}^{d_1}, ||h(\mathbf{x}) - h(\mathbf{x}')||_p \leq L||\mathbf{x} - \mathbf{x}'||_p$
		
\end{definition}



\begin{definition}[Attribution Robustness]
	\label{def: robustness}
	An attribution method $g(\xvec)$ is $(\lambda, \delta_p)$-locally robust if $g(\xvec)$ is $(\lambda, \delta_p)$-locally Lipschitz continuous, and  $\lambda$-globally robust if $g(\xvec)$ is $\lambda$-globally Lipschitz continuous.
\end{definition}

\begin{wrapfigure}{r}{0.4\textwidth}
	\vspace{-15pt}
	\begin{center}
		\includegraphics[width=0.3\textwidth]{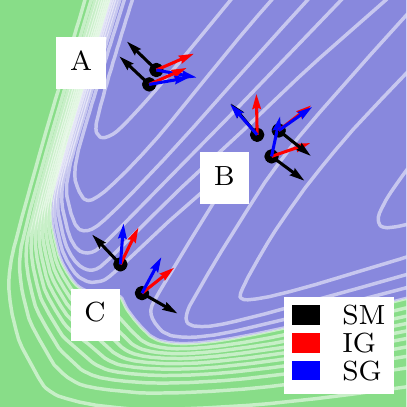}
	\end{center}
	\caption{Attributions normalized to unit length in two-dimensions. Score surface is represented by contours. Green and purple areas are two predictions. 
	}
	\vspace{-10pt}
	\label{fig: toyexample}
\end{wrapfigure}

Viewing the geometry of a model's boundaries in its input space provides insight into why various attribution methods may not be robust. 
We begin by analyzing the robustness of Saliency Maps in this way. 
Recalling Def. \ref{def: saliency map}, a Saliency Map represents the steepest direction of output increase at $\mathbf{x}$. 
However, as ReLU networks are typically very non-linear and the decision surface is often concave in the input space, this direction is usually only consistent in an exceedingly small neighborhood around $\mathbf{x}$. 
This implies weak robustness, as the $\delta_p$ needed to satisfy Definition~\ref{def: robustness} will be too small for most practical scenarios~\cite{ kindermans2017unreliability}. 
Theorem~\ref{thm: robustenss of Saliency Map} formalizes this, bounding the robustness of the Saliency Map in terms of the local Lipschitz continuity of the model at $\mathbf{x}$, and Example~\ref{toy} provides further intuition with a low-dimensional illustration. 

\begin{theorem}
	\label{thm: robustenss of Saliency Map}
	Given a model $f(\mathbf{x})$ is $(L, \delta_2)$-locally Lipschitz continuous in a ball $B(\mathbf{x}, \delta_2)$, the Saliency Map is $(\lambda, \delta_2)$-locally robustness where the upper-bound of $\lambda$ is $O(L)$.
\end{theorem}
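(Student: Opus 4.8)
The plan is to directly estimate the local Lipschitz constant of the Saliency Map $g(\xvec) = \triangledown_\xvec f(\xvec)$ by controlling how much the gradient can change as $\xvec'$ ranges over the ball $B(\xvec, \delta_2)$. Since the attacks of Definition~\ref{def: attribution attack} operate by replacing ReLU with the smooth surrogate $\text{S}$, I would treat $f$ as twice differentiable on $B(\xvec,\delta_2)$, which lets me express the gradient difference as a path integral of the Hessian. Concretely, applying the fundamental theorem of calculus to $\triangledown f$ along the segment $\xvec(t) = \xvec' + t(\xvec - \xvec')$ gives
\begin{equation}
\xgrad - \xcgrad = \paren{\int_0^1 \triangledown^2 f\paren{\xvec(t)}\, dt}(\xvec - \xvec').
\end{equation}

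Taking norms and pulling the operator norm through the integral then yields $\norm{\xgrad - \xcgrad}_2 \le \paren{\sup_{\xi \in B(\xvec,\delta_2)} \norm{\triangledown^2 f(\xi)}_{\mathrm{op}}} \norm{\xvec - \xvec'}_2$, so the Saliency Map is $(\lambda, \delta_2)$-locally Lipschitz with $\lambda$ bounded by the largest magnitude eigenvalue of the Hessian over the ball, i.e. $\lambda \le \sup_{\xi} \max\abs{\eig\paren{\triangledown^2 f(\xi)}}$. This reduces the theorem to a second-order smoothness estimate and is consistent with the paper's emphasis on the eigenvalues of the decision surface as the quantity governing attribution robustness.

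The remaining, and main, step is to bound this Hessian operator norm by $O(L)$ using the local Lipschitz constant $L$ of the model. The idea is to expand $\triangledown^2 f$ through the layers of the smoothed network: each factor appearing in the Hessian is a product of weight-matrix operator norms and activation derivatives, the very same products whose supremum bounds $\norm{\triangledown f} \le L$ and hence the first-order Lipschitz constant. The Hessian differs only by replacing one first-derivative activation factor with a second-derivative factor of the surrogate $\text{S}$, whose second derivative is uniformly bounded by a constant depending on $\beta$. Folding this bounded curvature constant into the big-$O$, the Hessian norm inherits the same weight-norm dependence that controls $L$, giving $\lambda = O(L)$.

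I expect the hard part to be exactly this passage from a first-order bound (Lipschitzness of $f$, i.e. a bound on $\norm{\triangledown f}$) to a second-order bound (Lipschitzness of $\triangledown f$, i.e. a bound on $\norm{\triangledown^2 f}$), since the two are not equivalent in general and the implication rests essentially on the smoothness and bounded curvature of the surrogate activation. Care is needed to track how the surrogate parameter $\beta$ and the per-layer weight norms combine so that the final constant is genuinely linear in $L$; restricting the supremum to $B(\xvec,\delta_2)$ is what keeps the Hessian norm finite and makes the claimed $(\lambda,\delta_2)$ bound meaningful, while also formalizing the intuition that a large $L$ (sharp geometry) forces $\delta_2$ to shrink before robustness holds.
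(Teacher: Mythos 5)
Your reduction to a Hessian bound is coherent as far as it goes, but it is not the paper's argument, and its final step has a real gap. The paper's proof is entirely first-order: it writes
\begin{equation}
\lambda \;=\; \max_{\mathbf{x}'\in B(\mathbf{x},\delta_2)} \frac{\|\triangledown_{\mathbf{x}}f(\mathbf{x}) - \triangledown_{\mathbf{x}'}f(\mathbf{x}')\|_2}{\|\mathbf{x}-\mathbf{x}'\|_2},
\end{equation}
applies the triangle inequality to bound the numerator by $2\max_{\mathbf{x}'\in B}\|\triangledown_{\mathbf{x}'}f(\mathbf{x}')\|_2$, and invokes the classical identity $L=\max_{\mathbf{x}'\in B}\|\triangledown_{\mathbf{x}'}f(\mathbf{x}')\|_2$ for an $(L,\delta_2)$-locally Lipschitz, differentiable $f$, concluding $\lambda \le 2L/\|\mathbf{x}-\mathbf{x}^*\|_2$, i.e.\ a bound proportional to $L$. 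No second-order information is used at all. (Your Hessian reduction does appear in the paper, but as Theorem~\ref{thm: hessian eigen}, where it motivates SSR; it is not how Theorem~\ref{thm: robustenss of Saliency Map} is proved.)

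The gap in your route is precisely the step you flag as the main one: bounding $\sup_{\xi}\|\triangledown^2 f(\xi)\|_{\mathrm{op}}$ by $O(L)$. A bound on $\|\triangledown f\|$ gives no control on $\|\triangledown^2 f\|$ in general, and your structural argument does not rescue it: when you differentiate the layer-wise product a second time, the term in which an activation first derivative is replaced by a second derivative also acquires an \emph{additional} copy of the inner Jacobian, not merely a swapped scalar factor. Already for $f(\mathbf{x})=\text{S}(\mathbf{w}^\top\mathbf{x})$ one has $\|\triangledown f\|_2 \le \|\mathbf{w}\|_2$ but $\|\triangledown^2 f\|_{\mathrm{op}} = |\text{S}''(\mathbf{w}^\top\mathbf{x})|\,\|\mathbf{w}\|_2^2$, which is quadratic in the weight norm; the honest output of your expansion is a bound of the form $O(\beta L^2)$, not $O(L)$. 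If you want to keep the second-order framing you need an independent curvature hypothesis (as Theorem~\ref{thm: hessian eigen} in effect assumes) rather than deriving one from $L$. The paper's first-order triangle-inequality argument avoids the issue entirely, at the cost that its bound $2L/\|\mathbf{x}-\mathbf{x}^*\|_2$ carries a denominator that is simply treated as a fixed constant of the ball.
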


\begin{example}
	\label{toy}
	Fig. \ref{fig: toyexample} shows the decision surface of an example two-dimensional ReLU network as a contour map, and the network's binary prediction boundary with green and purple regions. 
	The vectors represent the Saliency Map (black), Integrated Gradient (red), and Smooth Gradient (blue) for inputs in different neighborhoods ($A, B$ and $C$). 
	The attributions are normalized to unit length, so the difference in the direction is proportional to the corresponding $\ell_2$ distance of $g(\mathbf{x}, f)$ across the three methods~\cite{6977470}. 
	Observe that for points in the same neighborhood, the local geometry determines whether similar inputs receive similar attributions; therefore, attribution maps in $A$ are more robust than those in $B$ and $C$, which happen to sit on two sides of the ridge. 
\end{example}

\subsection{Robustness of Stochastic Smoothing}
\label{sec: Robustness of Stochastic Smoothing}
Local geometry determines the degree of robustness of the Saliency Map for an input. 
To increase robustness, an intuitive solution is to smooth this geometry, therefore increasing local-Lipschitz continuity. 
Dombrowiski et al.~\cite{dombrowski2019explanations} firstly prove that Softplus network has more robust Saliency Map than ReLU networks. Given a one-hidden-layer ReLU network $f_{r}$ and its counterpart $f_{s}$ obtained by replacing ReLU with Softplus, they further observe that with a reasonable choice of $\beta$ for the Softplus activation, the Saliency Map on $f_{s}$ is a good approximation to the Smooth Gradient on $f_{r}$. By approximating Smooth Gradient with Softplys, they managed to explain why Smooth Gradient may be more robust for a simple model. In this section, we generalize this result, and show that SmoothGrad can be calibrated to satisfy Def. \ref{def: robustness} for arbitrary networks. We begin by introducing Prop \ref{lemma: gaussina conv}, which establishes that applying Smooth Gradient to $f(\mathbf{x})$ is equivalent to computing the Saliency Map of a different model, obtained by convolving $f$ with isotropic Gaussian noise.




\begin{proposition}
	\label{lemma: gaussina conv}
	Given a model $f(\mathbf{x})$ and a user-defined noise level $\sigma$, the following equation holds for Smooth Gradient:
	$
	g(\mathbf{x}) = \mathbb{E}_{\mathbf{z}\sim\mathcal{N}(\mathbf{x}, \sigma^2\mathbf{I})} \triangledown_\mathbf{z} f(\mathbf{z}) =  \triangledown_\mathbf{x} [(f * q) (\mathbf{x})]
	$ where $q(\mathbf{x}) \sim \mathcal{N}(\mathbf{0}, \sigma^2\mathbf{I})$ and $*$ denotes the convolution operation~\cite{bracewell1978fourier}. 
\end{proposition}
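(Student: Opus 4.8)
The plan is to unfold both expressions as integrals against the isotropic Gaussian density and show they coincide by exploiting the standard ``derivative-of-a-convolution'' identity together with the symmetry of the Gaussian kernel. Write $q(\mathbf{u}) = (2\pi\sigma^2)^{-d/2}\exp(-\|\mathbf{u}\|^2/2\sigma^2)$ for the density of $\mathcal{N}(\mathbf{0},\sigma^2\mathbf{I})$, and note that the density of $\mathcal{N}(\mathbf{x},\sigma^2\mathbf{I})$ evaluated at $\mathbf{z}$ is exactly $q(\mathbf{z}-\mathbf{x})$. Then the right-hand side expands, using commutativity of convolution, as $(f*q)(\mathbf{x}) = \int_{\mathbb{R}^d} q(\mathbf{y})\, f(\mathbf{x}-\mathbf{y})\,d\mathbf{y}$.

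First I would differentiate under the integral sign to obtain $\triangledown_\mathbf{x}(f*q)(\mathbf{x}) = \int_{\mathbb{R}^d} q(\mathbf{y})\,(\triangledown f)(\mathbf{x}-\mathbf{y})\,d\mathbf{y}$, i.e. the gradient lands on the $f$ factor. Next I would apply the change of variables $\mathbf{z} = \mathbf{x}-\mathbf{y}$ (Jacobian of unit absolute value) to rewrite this as $\int_{\mathbb{R}^d} q(\mathbf{x}-\mathbf{z})\,(\triangledown f)(\mathbf{z})\,d\mathbf{z}$, and then invoke the evenness of the Gaussian, $q(\mathbf{x}-\mathbf{z}) = q(\mathbf{z}-\mathbf{x})$, to recognize the weight as precisely the density of $\mathcal{N}(\mathbf{x},\sigma^2\mathbf{I})$ at $\mathbf{z}$. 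This identifies the expression with $\mathbb{E}_{\mathbf{z}\sim\mathcal{N}(\mathbf{x},\sigma^2\mathbf{I})}\triangledown_\mathbf{z} f(\mathbf{z})$, which is the Smooth Gradient by Definition~\ref{def: smooth gradient}, closing the chain of equalities.

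The main obstacle is justifying the interchange of gradient and integral, since for a ReLU network $f$ is only piecewise linear and fails to be differentiable on the union of the activation-boundary hyperplanes. I would handle this by observing that $f$ is globally Lipschitz (piecewise affine), so $\triangledown f$ exists and is bounded almost everywhere, and its non-differentiability set is a finite union of hyperplanes and hence Lebesgue-null; the Gaussian integral therefore assigns it zero mass, so the a.e.-defined $\triangledown f$ suffices. Because the density $q$ and its gradient $\triangledown q$ decay super-polynomially while $f$ grows at most linearly, the integrand and its $\mathbf{x}$-derivative admit an integrable dominating function uniformly on a neighborhood of $\mathbf{x}$, and the dominated convergence theorem then legitimizes differentiating under the integral. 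An equivalent route, which sidesteps moving the derivative onto the non-smooth $f$, is to keep the derivative on the smooth kernel, writing $\triangledown_\mathbf{x}(f*q)(\mathbf{x}) = (f*\triangledown q)(\mathbf{x})$, and then recover $(\triangledown f)*q$ through the standard convolution-derivative identity $\triangledown(f*q) = (\triangledown f)*q = f*(\triangledown q)$; I would use whichever form keeps the regularity bookkeeping cleanest.
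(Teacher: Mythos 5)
Your proof is correct, but it takes a different route from the paper's. You argue via the classical convolution--derivative identity: expand $(f*q)(\mathbf{x})=\int q(\mathbf{y})f(\mathbf{x}-\mathbf{y})\,d\mathbf{y}$, differentiate under the integral so the gradient lands on $f$, then change variables and use the evenness of the Gaussian to recognize $\mathbb{E}_{\mathbf{z}\sim\mathcal{N}(\mathbf{x},\sigma^2\mathbf{I})}[\triangledown f(\mathbf{z})]$. The paper instead routes through Bonnet's/Stein's identity (following Lin et al.): it first shows $\mathbb{E}_q[\triangledown_{\mathbf{z}}f(\mathbf{z})]=\mathbb{E}_q[(\sigma^2\mathbf{I})^{-1}(\mathbf{z}-\mathbf{x})f(\mathbf{z})]$ by integration by parts, then separately computes $\triangledown_{\mathbf{x}}\mathbb{E}_q[f(\mathbf{z})]$ by differentiating the Gaussian density with respect to its mean parameter --- which produces the same score-function weighting --- and finally identifies $\mathbb{E}_q[f(\mathbf{z})]$ with $(f*q)(\mathbf{x})$ via the same symmetry observation you use. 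The paper's approach keeps all derivatives on the smooth Gaussian kernel throughout, which is conceptually the same move as your alternative $\triangledown(f*q)=f*(\triangledown q)$ route; your primary route puts the derivative on $f$ and therefore requires the regularity bookkeeping you supply (a.e.\ differentiability of the piecewise-linear ReLU network, null non-differentiability set under the Gaussian, and a dominating function for the interchange), which is in fact more careful than what the paper writes down. Both arguments are sound; yours is more elementary and self-contained, while the paper's leans on a citable lemma and yields the score-function form of the smoothed gradient as a useful by-product (it is reused in the proof of Theorem~\ref{thm: smoothgrad} to compute the Hessian of the smoothed model).
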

The convolution term reduces to Gaussian Blur, a widely-used technique in image denoising, if the input has only 2 dimensions. Convolution with Gaussian noise smooths the local geometry, and produces a more continuous gradient. This property is formalized in Theorem \ref{thm: smoothgrad}


\begin{theorem}
	\label{thm: smoothgrad}
	Given a model $ f(\mathbf{x})$ where $\sup_{\xvec\in \mathbb{R}^d}|f(\xvec)| = F < \infty$, Smooth Gradient with standard deviation $\sigma$ is $\lambda$-globally robust where $\lambda \leq 2F / \sigma^2$.
\end{theorem}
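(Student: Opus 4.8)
The plan is to combine Proposition~\ref{lemma: gaussina conv} with the standard fact that a continuously differentiable vector field is globally Lipschitz with constant equal to the supremum over the domain of the operator norm of its Jacobian. By Proposition~\ref{lemma: gaussina conv} we have $g(\xvec) = \nabla_\xvec[(f*q)(\xvec)]$, so the Jacobian of $g$ is the Hessian $H(\xvec) \defeq \nabla^2_\xvec (f*q)(\xvec)$, which is symmetric because $f*q$ is smooth (the Gaussian kernel $q$ is $C^\infty$). Via the mean value inequality, $\norm{g(\xvec)-g(\xvec')} \le \paren{\sup_{\mathbf{u}}\norm{H(\mathbf{u})}_{op}}\norm{\xvec-\xvec'}$, so it suffices to establish that $\norm{H(\xvec)}_{op}\le 2F/\sigma^2$ for every $\xvec$.

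The next step is to push both derivatives off of $f$ and onto the kernel. Writing $(f*q)(\xvec)=\int f(\mathbf{y})\, q(\xvec-\mathbf{y})\,d\mathbf{y}$ and differentiating twice under the integral sign gives $H(\xvec) = \int f(\mathbf{y})\,[\nabla^2 q](\xvec-\mathbf{y})\,d\mathbf{y} = (f * \nabla^2 q)(\xvec)$. A direct computation on $q(\mathbf{u}) = (2\pi\sigma^2)^{-d/2}\exp(-\norm{\mathbf{u}}^2/2\sigma^2)$ yields the closed form $\nabla^2 q(\mathbf{u}) = \paren{\tfrac{\mathbf{u}\mathbf{u}^\top}{\sigma^4} - \tfrac{1}{\sigma^2}\mathbf{I}}\, q(\mathbf{u})$, which reduces the problem to an integral against an explicit Gaussian weight.

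Finally I would bound the spectral norm through its quadratic form. For any unit vector $\mathbf{v}$, using $|f|\le F$,
\begin{equation*}
\abs{\mathbf{v}^\top H(\xvec)\,\mathbf{v}} \le F \int \abs{\mathbf{v}^\top \nabla^2 q(\mathbf{u})\,\mathbf{v}}\,d\mathbf{u} = F\int \abs{\tfrac{(\mathbf{v}^\top\mathbf{u})^2}{\sigma^4}-\tfrac{1}{\sigma^2}}\,q(\mathbf{u})\,d\mathbf{u}.
\end{equation*}
Applying the triangle inequality to the integrand and using that $\mathbf{v}^\top\mathbf{u}\sim\mathcal{N}(0,\sigma^2)$ under $q$ (so $\E[(\mathbf{v}^\top\mathbf{u})^2]=\sigma^2$) gives $\int\paren{\tfrac{(\mathbf{v}^\top\mathbf{u})^2}{\sigma^4}+\tfrac{1}{\sigma^2}}q(\mathbf{u})\,d\mathbf{u} = \tfrac{1}{\sigma^2}+\tfrac{1}{\sigma^2}=\tfrac{2}{\sigma^2}$. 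Since $H(\xvec)$ is symmetric, $\norm{H(\xvec)}_{op}=\max_{\norm{\mathbf{v}}=1}\abs{\mathbf{v}^\top H(\xvec)\,\mathbf{v}}\le 2F/\sigma^2$, yielding the claimed global Lipschitz constant $\lambda \le 2F/\sigma^2$.

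The main obstacle is the rigor of interchanging differentiation and integration: I must justify differentiating twice under the integral, which follows from $f$ being bounded together with the integrability and uniform domination of the Gaussian kernel's first and second derivatives (each entry of $\nabla^2 q$ lies in $L^1$ with Gaussian tail decay). A secondary point is the passage from the pointwise Hessian bound to global Lipschitz continuity of $g$, which is the mean value inequality for vector fields and is clean once $f*q\in C^2$, again a consequence of the smoothness of $q$. Note that the triangle-inequality step is what makes the constant exactly $2/\sigma^2$; evaluating the integral of the absolute value directly would produce the tighter constant $4\phi(1)/\sigma^2 \approx 0.97/\sigma^2$, but the looser bound is all the statement requires.
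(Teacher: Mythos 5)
Your argument is correct and is essentially the paper's own proof: the paper likewise reduces global robustness of Smooth Gradient to a uniform spectral-norm bound on the Hessian of the Gaussian-convolved model (its Lemmas 3 and 4), obtains the same closed form $\tilde{\mathbf{H}}_\xvec = \frac{1}{\sigma^4}\mathbb{E}\left[\left((\zvec-\xvec)(\zvec-\xvec)^\top-\sigma^2\mathbf{I}\right)f(\zvec)\right]$ that you get by differentiating the kernel, and applies the same triangle-inequality split with $\mathbb{E}\left[(\zvec-\xvec)(\zvec-\xvec)^\top\right]=\sigma^2\mathbf{I}$ to land on $2F/\sigma^2$. Your additional care about differentiating under the integral and your observation that the constant could be tightened to $4\phi(1)/\sigma^2$ go beyond what the paper records, but the route is the same.
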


Theorem \ref{thm: smoothgrad} shows that the global robustness of Smooth Gradient is  $O(1/\sigma^2)$ where a higher noise level leads to a more robust Smooth Gradient. On the other hand, lower supremum of the absolute value of output scores will also deliever a more robust Smooth Gradient. To explore the local robustness of Smooth Gradient, we associate Theorem \ref{thm: robustenss of Saliency Map} to locate the condition when SmoothGrad is more robust than Saliency Map.

\begin{proposition}
	\label{prop: compare SG with S}
	Let $f$ be a model where $\sup_{\xvec\in \mathbb{R}^d}|f(\xvec)| = F < \infty$ and  $f$ is also $(L, \delta_2)$-locally Lipschitz continuous in the ball $B(\mathbf{x}, \delta_2)$.  With a proper chosen standard deviation $\sigma >\sqrt{\delta_2F/L}$, the upper-bound of the local robustness of Smooth Gradient is always smaller than the upper-bound of the local robustness of Saliency Map.
\end{proposition}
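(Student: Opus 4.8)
The plan is to reduce the proposition to a single scalar inequality between the two upper bounds that are already established, and then to verify that this inequality is exactly what the hypothesis $\sigma > \sqrt{\delta_2 F/L}$ buys us. First I would collect the two ingredients. Theorem~\ref{thm: robustenss of Saliency Map} states that the Saliency Map is $(\lambda, \delta_2)$-locally robust with the upper bound on $\lambda$ being $O(L)$; making the hidden constant explicit from that theorem's proof, this bound reads $\lambda_{\mathrm{SM}} \le 2L/\delta_2$. Theorem~\ref{thm: smoothgrad} states that the Smooth Gradient is $\lambda$-globally robust with $\lambda_{\mathrm{SG}} \le 2F/\sigma^2$.

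Next I would bridge the mismatch in scope, since Theorem~\ref{thm: smoothgrad} is a \emph{global} statement whereas the proposition compares \emph{local} robustness. This passage is immediate from Definition~\ref{def: lipschitz}: if $h$ is $\lambda$-globally Lipschitz continuous, then restricting the universal quantifier over $\mathbb{R}^{d_1}$ to the ball $B(\xvec, \delta_2)$ shows that $h$ is $(\lambda, \delta_2)$-locally Lipschitz continuous with the \emph{same} constant. Hence $2F/\sigma^2$ is also a valid upper bound on the local robustness of the Smooth Gradient on $B(\xvec, \delta_2)$, and the claim reduces to showing $2F/\sigma^2 < 2L/\delta_2$.

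Finally I would carry out the one-line algebra. Because $F, L, \delta_2, \sigma$ are all strictly positive, squaring and rearranging the hypothesis gives an equivalence directly to the target inequality:
\[
\sigma > \sqrt{\delta_2 F/L} \iff \sigma^2 > \frac{\delta_2 F}{L} \iff \frac{2F}{\sigma^2} < \frac{2L}{\delta_2}.
\]
Chaining this with the two bounds yields $\lambda_{\mathrm{SG}} \le 2F/\sigma^2 < 2L/\delta_2 = \lambda_{\mathrm{SM}}$, which is precisely the comparison claimed.

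The only genuine subtlety here, rather than an obstacle, is the bookkeeping of constants: the proposition is phrased against the loose $O(L)$ from Theorem~\ref{thm: robustenss of Saliency Map}, so I must confirm that its implicit constant is exactly $2L/\delta_2$ (equivalently, that the threshold $\sqrt{\delta_2 F/L}$ is the value of $\sigma$ at which $2F/\sigma^2$ meets that bound). If the constant differed, the comparison would only hold up to a fixed factor and the threshold would shift accordingly. The global-to-local restriction and the final rearrangement are both routine once this constant is pinned down.
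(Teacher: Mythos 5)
Your proposal follows essentially the same route as the paper: both reduce the claim to comparing the explicit Smooth Gradient bound $2F/\sigma^2$ from Theorem~\ref{thm: smoothgrad} (carried from global to local robustness, exactly as you do) against the bound extracted from the proof of Theorem~\ref{thm: robustenss of Saliency Map}, using only $\sigma^2 > \delta_2 F/L$ and the fact that points of $B(\xvec,\delta_2)$ lie within distance $\delta_2$ of $\xvec$. The one item you flagged for confirmation does need correcting: the constant derived in the paper's proof of Theorem~\ref{thm: robustenss of Saliency Map} is $2L/\lVert\xvec-\xvec^*\rVert_2$, where $\xvec^*$ is the maximizer of the gradient-difference ratio, \emph{not} $2L/\delta_2$; in fact $2L/\delta_2$ need not upper-bound the Saliency Map's Lipschitz constant at all, since $\lVert\xvec-\xvec^*\rVert_2$ can be much smaller than $\delta_2$. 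This discrepancy happens to work in your favor: because $\lVert\xvec-\xvec^*\rVert_2\le\delta_2$, your inequality $2F/\sigma^2 < 2L/\delta_2 \le 2L/\lVert\xvec-\xvec^*\rVert_2$ still places the Smooth Gradient bound strictly below the paper's Saliency Map bound, which is precisely the chain the paper's own proof writes out. Only your concluding ``$=\lambda_{\mathrm{SM}}$'' should be dropped: the right-hand quantity is an upper bound on the Saliency Map's robustness constant, not its value.
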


\noindent\textbf{Remark}. Upper-bounds of the robustness coefficients describe the worst-case dissimilarity between attributions for nearby points. The least reasonable noise level is proportional to $\sqrt{1/L}$. When we fix the size of the ball, $\delta_2$, Saliency Map in areas where the model has lower Lipschitz continuity constant is already very robust according to Theorem \ref{thm: robustenss of Saliency Map}. Therefore, to significantly outperform the robustness of Saliency Map in a scenario like this, we need a higher noise level $\sigma$. For a fixed Lipschitz constant $L$, achieving local robustness across a larger local neighborhood requires proportionally larger $\sigma$.

\subsection{Transferability of Local Perturbations}
\label{sec: Transferability of Local Perturbations}
Viewing Prop.~\ref{prop: compare SG with S} from an adversarial view, it is possible that an adversary happens to find a certain neighbor whose local geometry is totally different from the input so that the chosen noise level is not large enough to produce semantically similar attribution maps. Similar idea can also be applied to Integrated Gradient. Region B in the Example~\ref{toy} shows that how tiny local change can affect the gradient information between the baseline and the input: when two nearby points are located on each side of the ridge, the linear path from the baseline (left bottom corner) towards the input can be very different. The possibility of a transferable adversarial noise is critical since attacking Saliency Map requires much less computation budget than attacking Smooth Gradient and Integrated Gradient. We demonstrate the transfer attack of attribution maps and discuss the resistance of different methods against transferred adversarial noise in the Experiment III of Sec.~\ref{sec: exp}.
\pxm{todo}

\section{Towards Robust Attribution}
\label{sec: Towards Robust Attribution}
In this section, we propose a remedy to improve the robustness of gradient-based attributions by using Smooth Surface Regularization during the training. Alternatively, without retraining the model, we discuss Uniform Gradient, another stochastic smoothing for the local geometry, towards robust interpretation. Gradient-based attributions relates deeply to Saliency Map by definitions. Instead of directly improving the robustness of Smooth Gradient or Integrated Gradient, which are computationally intensive, we should simply consider how to improve the robustness of Saliency Map, the continuity of input gradient. Theorem \ref{thm: hessian eigen} builds the connection between the continuity of gradients for a general function with the input Hessian.

\begin{theorem}
	\label{thm: hessian eigen}
	Given a twice-differentiable function $f:\mathbb{R}^{d_1}\rightarrow\mathbb{R}^{d_2}$, with the first-order Taylor approximation, $\max_{\xvec'\in B(\xvec, \delta_2)}||\triangledown_\xvec f(\xvec) - \triangledown_{\xvec'} f(\xvec') ||_2 \leq \delta_2\max_i|\xi_i|$ where $\xi_i$ is the $i$-th eigenvalue of the input Hessian $\mathbf{H}_\xvec$.
\end{theorem}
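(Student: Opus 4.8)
The plan is to expand the gradient map $\triangledown f$ to first order about $\xvec$ and then control the resulting linear term by the spectral norm of the Hessian. Since the eigenvalues $\xi_i$ of $\mathbf{H}_\xvec$ are only well-defined when $\mathbf{H}_\xvec$ is a square matrix, I read the statement with $f$ being the scalar quantity of interest (so $\triangledown f : \mathbb{R}^{d_1} \to \mathbb{R}^{d_1}$ and $\mathbf{H}_\xvec$ is $d_1 \times d_1$), which is the regime the rest of the paper operates in. As $f$ is twice-differentiable, I write the first-order Taylor approximation of the gradient at $\xvec$ as $\triangledown_{\xvec'} f(\xvec') \approx \triangledown_\xvec f(\xvec) + \mathbf{H}_\xvec(\xvec' - \xvec)$, where $\mathbf{H}_\xvec$ is the Hessian of $f$ at $\xvec$.

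First I would subtract $\triangledown_\xvec f(\xvec)$, giving $\triangledown_\xvec f(\xvec) - \triangledown_{\xvec'} f(\xvec') \approx -\mathbf{H}_\xvec(\xvec' - \xvec)$, and take $\ell_2$ norms. Submultiplicativity of the induced operator norm then yields $\|\triangledown_\xvec f(\xvec) - \triangledown_{\xvec'} f(\xvec')\|_2 \leq \|\mathbf{H}_\xvec\|_2 \, \|\xvec' - \xvec\|_2$. Next I would invoke the standard fact that for a \emph{symmetric} matrix the spectral $2$-norm coincides with the largest eigenvalue in absolute value, i.e. $\|\mathbf{H}_\xvec\|_2 = \max_i |\xi_i|$; symmetry of $\mathbf{H}_\xvec$ follows from the equality of mixed partials for twice-differentiable $f$. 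Finally, since $\xvec' \in B(\xvec, \delta_2)$ we have $\|\xvec' - \xvec\|_2 \leq \delta_2$, so the bound reads $\delta_2 \max_i |\xi_i|$; as this holds uniformly for every $\xvec'$ in the ball, taking $\max_{\xvec' \in B(\xvec, \delta_2)}$ preserves it and gives the claim.

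The point requiring care is the symmetry step: for a non-symmetric matrix the spectral norm is the largest singular value rather than the largest absolute eigenvalue, so the eigenvalue characterization would fail — hence I would make the equality $\mathbf{H}_\xvec = \mathbf{H}_\xvec^\top$ explicit before passing to $\max_i |\xi_i|$. The only genuine looseness is that the estimate is stated "with the first-order Taylor approximation," so the second-order remainder is dropped. I do not expect this to be a serious obstacle, but a fully rigorous version would either carry the Lagrange remainder (bounding it via a uniform bound on the variation of the Hessian over the ball, or on third derivatives) or state the conclusion as holding up to $O(\delta_2^2)$ terms, which is consistent with how the theorem is phrased.
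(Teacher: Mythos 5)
Your proposal is correct and follows essentially the same route as the paper's proof: a first-order Taylor expansion of the gradient, bounding $\|\mathbf{H}_\xvec(\xvec'-\xvec)\|_2$ by the spectral norm times $\delta_2$, and identifying the spectral norm of the (symmetric) Hessian with $\max_i|\xi_i|$. Your explicit remarks on the symmetry of $\mathbf{H}_\xvec$ and on the dropped Taylor remainder are points the paper leaves implicit, but they do not change the argument.
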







\noindent \textbf{Smooth Surface Regularization}.
Direct computation of the input Hessian can be expensive and in the case of ReLU networks, not possible to optimize as the second-order derivative is zero. Singla et al~\cite{pmlr-v97-singla19a} introduce a closed-form solution of the input Hessian for ReLU networks and find its eigenvalues without doing exact engein-decomposition on the Hessian matrix. Motivated by Theorem \ref{thm: hessian eigen} we propose Smooth Surface Regularization (SSR) to minimize the difference between Saliency Maps for nearby points. 
\begin{proposition}[Singla et al' s Closed-form Formula for Input Hessian~\cite{pmlr-v97-singla19a}]
	Given a ReLU network $f(\mathbf{x})$, the input Hessian of the loss can be approximated by $\mathbf{\Tilde{H}}_\mathbf{x} = W(\text{diag}(\mathbf{p})-\mathbf{p}^\top \mathbf{p})W^\top$, where $W$ is the Jacobian matrix of the logits vector w.r.t to the input and $\mathbf{p}$ is the probits of the model. $\text{diag}(\mathbf{p})$ is an identity matrix with its diagonal replaced with $\mathbf{p}$. $\mathbf{\Tilde{H}}_\mathbf{x}$ is positive semi-definite.
\end{proposition}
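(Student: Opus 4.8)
The plan is to treat this as a direct computation of the input Hessian of the cross-entropy loss, paired with a short positive-semidefiniteness argument for the softmax covariance matrix. First I would fix notation: write the loss as $\ell(\mathbf{x}) = \log \sum_c \exp(z_c(\mathbf{x})) - z_y(\mathbf{x})$, where $\mathbf{z}(\mathbf{x})$ is the logit vector, $y$ the label, $\mathbf{p} = \mathrm{softmax}(\mathbf{z})$ the probits, and $W$ the Jacobian of $\mathbf{z}$ with respect to $\mathbf{x}$ (its columns being the per-logit gradients $\triangledown_\mathbf{x} z_c$). Differentiating once gives the gradient $\triangledown_\mathbf{x}\ell = W(\mathbf{p} - \mathbf{e}_y)$, where $\mathbf{e}_y$ is the one-hot label vector.

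Next I would differentiate a second time and split the Hessian by the product rule into two pieces: a term $\sum_c (p_c - \delta_{cy})\,\triangledown^2_\mathbf{x} z_c$ collecting the second derivatives of the logits, and a term collecting the variation of $\mathbf{p}$. For the second piece I would use the standard softmax Jacobian $\partial \mathbf{p}/\partial \mathbf{z} = \mathrm{diag}(\mathbf{p}) - \mathbf{p}\mathbf{p}^\top$ (the outer product written $\mathbf{p}^\top\mathbf{p}$ in the statement under the row-vector convention) and compose it with $\partial \mathbf{z}/\partial \mathbf{x} = W^\top$; this turns the second piece into exactly $W(\mathrm{diag}(\mathbf{p}) - \mathbf{p}\mathbf{p}^\top)W^\top$, which is the claimed $\mathbf{\Tilde{H}}_\mathbf{x}$.

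The main obstacle, and the reason the result is stated as an \emph{approximation} $\mathbf{\Tilde{H}}$ rather than an equality, is the first piece. Here I would invoke the piecewise-affine structure of ReLU networks: each logit $z_c$ is affine on the interior of the activation region containing $\mathbf{x}$, so $\triangledown^2_\mathbf{x} z_c = 0$ there and the first piece vanishes identically. The only subtlety is the measure-zero set of region boundaries, where $f$ is not twice differentiable and a fully rigorous account would carry distributional boundary terms; I would dispatch this by restricting attention to the interior of the current linear region, on which $\mathbf{\Tilde{H}}_\mathbf{x}$ is exact and serves as the local surrogate for the true Hessian.

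Finally, for positive semidefiniteness I would show the middle factor $M := \mathrm{diag}(\mathbf{p}) - \mathbf{p}\mathbf{p}^\top$ is PSD and conclude by congruence. For any $\mathbf{v} \in \mathbb{R}^C$ we have $\mathbf{v}^\top M \mathbf{v} = \sum_c p_c v_c^2 - \paren{\sum_c p_c v_c}^2$, which is precisely the variance of the random variable taking value $v_c$ with probability $p_c$ (using $p_c \ge 0$ and $\sum_c p_c = 1$), hence nonnegative by Cauchy--Schwarz. Since $M \succeq 0$ and $\mathbf{u}^\top \mathbf{\Tilde{H}}_\mathbf{x} \mathbf{u} = (W^\top \mathbf{u})^\top M (W^\top \mathbf{u}) \ge 0$ for every $\mathbf{u}$, the matrix $\mathbf{\Tilde{H}}_\mathbf{x} = W M W^\top$ is positive semi-definite, completing the argument.
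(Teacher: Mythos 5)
Your derivation of the formula itself follows essentially the same route as the paper: both compute $\triangledown_\mathbf{x}\mathcal{L} = W(\mathbf{p}-\mathbf{e}_y)$ and then differentiate again through the softmax Jacobian $\mathrm{diag}(\mathbf{p})-\mathbf{p}\mathbf{p}^\top$ to obtain $W(\mathrm{diag}(\mathbf{p})-\mathbf{p}\mathbf{p}^\top)W^\top$. The paper handles the logit-curvature term by simply writing the network as locally affine, $\hat{\mathbf{y}}=W^\top\mathbf{x}+\mathbf{b}$, from the outset, whereas you isolate the term $\sum_c(p_c-\delta_{cy})\triangledown^2_\mathbf{x}z_c$ explicitly and argue it vanishes on the interior of each activation region; your version is more transparent about why the result is an approximation and about the measure-zero boundary caveat. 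The genuine divergence is in the positive-semidefiniteness step: the paper shows the middle factor $\mathbf{A}=\mathrm{diag}(\mathbf{p})-\mathbf{p}\mathbf{p}^\top$ is diagonally dominant, invokes the Gershgorin circle theorem to place its eigenvalues in the nonnegative half-line, and then passes through a Cholesky factorization $\mathbf{A}=\mathbf{M}\mathbf{M}^\top$ to conclude $W\mathbf{M}(W\mathbf{M})^\top\succeq 0$. You instead evaluate the quadratic form directly, recognizing $\mathbf{v}^\top\mathbf{A}\mathbf{v}=\sum_c p_c v_c^2-\left(\sum_c p_c v_c\right)^2$ as a variance, and conclude by congruence. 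Your argument is more elementary and arguably cleaner: it avoids appealing to a factorization whose existence presupposes the very semidefiniteness being established, and it sidesteps the paper's slightly loose claim that the Gershgorin bound makes the eigenvalues strictly positive (diagonal dominance here only gives nonnegativity, which is all that is needed). Both arguments are valid; yours buys simplicity, while the paper's Cholesky factorization is reused later (Supplementary Material B1) to compute the eigenvalues of $\mathbf{\Tilde{H}}_\mathbf{x}$ cheaply via $\mathbf{B}^\top\mathbf{B}$ with $\mathbf{B}=W\mathbf{M}$, so the factorization is not merely decorative in context.
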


\begin{definition}[Smooth Surface Regularization (SSR)] 
	\label{def: SSR}
	Given data pairs $(\mathbf{x}, y)$ drawn from a distribution $\mathcal{D}$, the training objective of SSR is given by
	\begin{align}
		\min_{\boldsymbol{\theta}} \mathbb{E}_{(\mathbf{x}, y) \sim \mathcal{D}} [\mathcal{L}((\mathbf{x}, y); \boldsymbol{\theta}) + \beta s \max_i\xi_i ] 
	\end{align} where $\boldsymbol{\theta}$ is the parameter vector of the the model $f$, $\max_i\xi_i$ is the largest eigenvalue of the Hessian matrix $\mathbf{\Tilde{H}}_\mathbf{x}$ of the regular training loss $\mathcal{L}$ (e.g. Cross Entropy) w.r.t to the input. $\beta$ is a hyper-parameter for the penalty level and $s$ ensures the scale of the regularization term is comparable to regular loss.
\end{definition}


\noindent\textbf{An Alternative Stochastic Smoothing}.
Users demanding robust interpretations are not always willing to afford extra budget for re-training. Except convolution with Guassian function, there is an alternative surface smoothing technique widely used in the mesh smoothing, \emph{laplacian smoothing}~\cite{Herrmann1976LaplacianIsoparametricGG, 10.1145/1057432.1057456}. The key idea behind \emph{laplacian smoothing} is that it replace the value of a point of interest on a surface using the aggregation of all neighbors within a specific distance. Adapting the motivation of \emph{laplacian smoothing}, we examine the smoothing performance of UniGrad in this paper as well.
\begin{definition}[Uniform Gradient (UG) ] Given an input point $\mathbf{x}$, the Uniform Gradient $UG_c(\mathbf{x})$ is defined as:
	$
	g(\mathbf{x}) = \triangledown_\mathbf{x}\mathbb{E}_p f(\mathbf{z}) = \mathbb{E}_p \triangledown_\mathbf{x}f(\mathbf{z})
	$ where $p(\mathbf{z}) = U(\xvec, r)$ is a uniform distribution centered at $\xvec$ with radius r.
\end{definition} 
The second equality holds since the integral boundaries are not a function of the input due to the Leibniz integral rule. We include the visual comparisons of Uniform Gradient with other attribution methods in the Appendix D1. We also show the change of visualization against the noise radius $r$ in the Supplementary Material D2.



\section{Experiments}\label{sec: exp}
\zifan{Find a better way to visualize the results}

\zifan{Parameter Sweeping of PGD training}

\zifan{Variability Experiments}




\begin{figure}[t!]
			
	\begin{subfigure}{0.4\textwidth}
		\centering
		\includegraphics[width=1.0\textwidth]{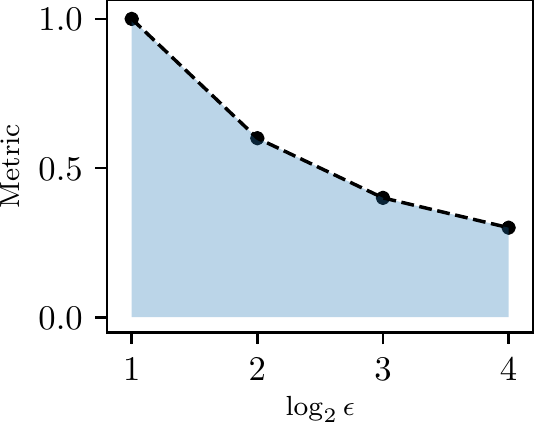} 
		\caption{\pxm{text small}}
		\hspace{-10pt}
						
		\label{fig: auc}
		\vspace{-20pt}
	\end{subfigure}
	\begin{subfigure}{.6\textwidth}
		\centering
		\includegraphics[width=0.91\textwidth]{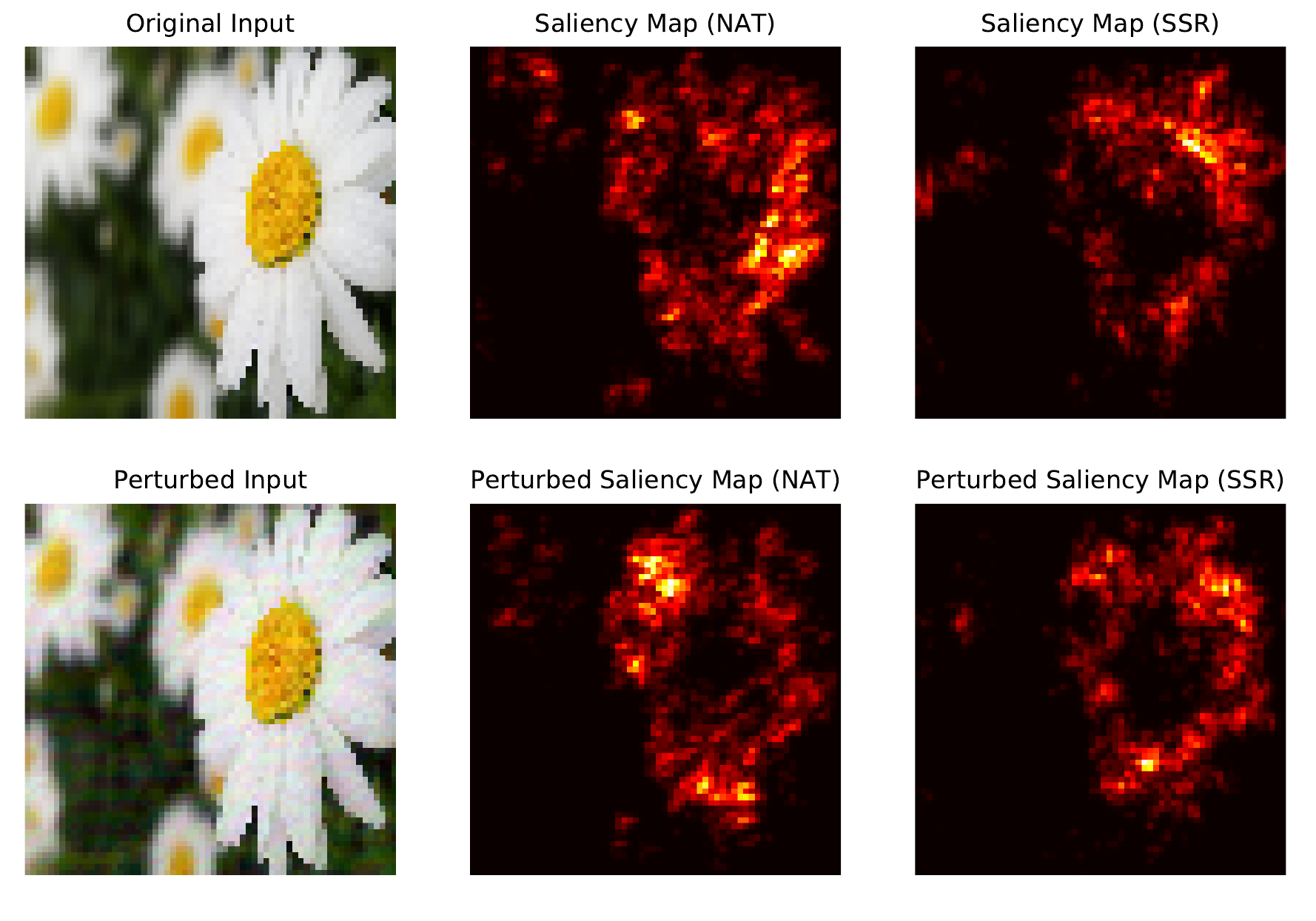}
		\caption{\pxm{text very small}}
		\label{fig:SSR-NAT}
	\end{subfigure}
			
	\caption{(a): Illustration of log-based AUC evaluation metric of Sec.~\ref{sec: exp}. We evaluate each attack with $\epsilon_\infty=2, 4, 8, 16$ and compute the area under the curve for each metric, e.g. top-k intersection. (b): An example of visual comparisons on the attribution attack on Saliency Map with nature training (NAT) and with Smooth Surface Regularization (SSR, $\beta=0.3$) training. We then apply the \emph{manipulate attack} with maximum allowed perturbation $\epsilon=8$ in the $\ell_\infty$ space for 50 steps to the same image, respectively. The perturbed input for NAT model is omitted since they are visually similar. }
\end{figure}


\newcommand{\zvec}{\mathbf{z}}
In this section, we evaluate the performance of Attribution Attack on CIFAR-10~\cite{cifar} and Flower~\cite{mamaev_2018} with ResNet-20 model and on ImageNet~\cite{imagenet_cvpr09} with pre-trained ResNet-50 model. We apply the \emph{top-k}~\cite{Ghorbani2017InterpretationON} and \emph{manipulate} attack~\cite{dombrowski2019explanations} to each method. To evaluate the similarity between the original and perturbed attribution maps, except the metrics used by \cite{Ghorbani2017InterpretationON}: Top-k Intersection (k-in), Spearman's rank-order correlation (cor) and Mass Center Dislocation (cdl), we also include Cosine Distance (cosd) as higher cosine distance corresponds to higher $\ell_2$ distance between attribution maps and the value range is irrelevant with dimensions of input features, which provides comparable results across datasets compared to the $\ell_2$ distance. Further implementation details are included in the Supplementary Material C1.

\subsection{Experiment I: Robustness via Stochastic Smoothing}
\label{exp: 1}
In this experiment, we compare the robustness of different attribution methods on models with the natural training algorithm. 

\noindent\textbf{Setup}. We optimize the Eq. (\ref{eq: attribution attack}) with maximum allowed perturbation $\epsilon_\infty=2, 4, 8, 16$ for 500 images for CIFAR-10 and  Flower and 1000 images for ImageNet. We first take the average scores over all evaluated images and then aggregate the results over different $\log_2\epsilon_\infty$ using the area under the metric curve (AUC) (the  $\log$ scale ensures each $\epsilon_\infty$ is equally treated and an illustration is shown in Fig.~\ref{fig: auc}). Higher AUC scores of k-in and cor or lower AUC scores of cdl and cosd indicate higher similarity between the original and the perturbed attribution maps. More information about hyper-parameters is included in the Supplementary Material B2.

\noindent\textbf{Numerical Analysis}. We show the reulst on cosd metric in Fig.~\ref{fig: standard_models} and the full results are included in Table \ref{tab: standar model} from Supplementary Material C shown. We conclude that attributions with stochastic smoothing, Smooth Gradient and Uniform Gradient, are showing better robustness than Saliency Map and Integrated Gradient on most metrics, espeically for ImageNet. What is more, Smooth Gradient does better on dataset with smaller sizes while Uniform does better on images with larger size. The role of dimensionality in stochastic smoothing and attribution robustness is an interesting future topic to research on.

\subsection{Experiment II: Robustness via Regularization}
\label{exp: 2}

Secondly, we evaluate the improvement of robustness via SSR. For the baseline methods, we include Madry's training~\cite{aleks2017deep} and IG-NORM~\cite{chen2019robust}, a recent proposed  regularization to improve the robustness of Integrated Gradient. 


\noindent\textbf{Setup}. SSR: we use the scaling coefficient $s=1e6$ and the penalty $\beta=0.3$. We discuss the choice of hyper-parameter for SSR in the the Supplementary Material B3. Madry's: we use cleverhans~\cite{papernot2018cleverhans} implementation with PGD perturbation $\delta_2=0.25$ in the $\ell_2$ space and the number of PGD iterations equals to 30. IG-NORM: We use the author's release code with default penalty level $\gamma=0.1$. We maintain the same training accuracies and record the per-epoch time with batch size of 32 on one NVIDIA Titan V. However, we discover that IG-NORM is sensitive to the weight initialization and the convergence rate is relatively slower than other in a significant way. We therefore present the best result among all attempts. For each attribution attack, we evaluate 500 images from CIFAR-10.

\noindent\textbf{Visualization}. We demonstrate a visual comparison between the perturbed Saliency maps of models with natural training and SSR training, respectively, in Fig~\ref{fig:SSR-NAT}. After the same attack, regions with high density of attribution scores remain similar for the model with SSR training.

\noindent\textbf{Numerical Analysis}. We show the results on cosd in Fig.~\ref{fig: ssr_vs_baseline} and full experimental result is included in Table~\ref{tab: robust model} of Supplementary Material C. We summarize the findings: 1) compared with results on the same model with natural training, SSR provides much better robustness nearly on all the metrics for all attribution methods; 2) SSR provides comparable and sometimes even better robustness compared to Madry's adversarial training. However, we also notice that playing with parameters in Madry's training influence robustness. We include the experiments of variability and parameter sweeping in Supplementary Material C. 3) Though IG-NORM provides the best performance, it has high costs of training time and accuracy, while SSR and Madry's training have lower costs.

\begin{figure}
	\centering
	\includegraphics[width=\textwidth]{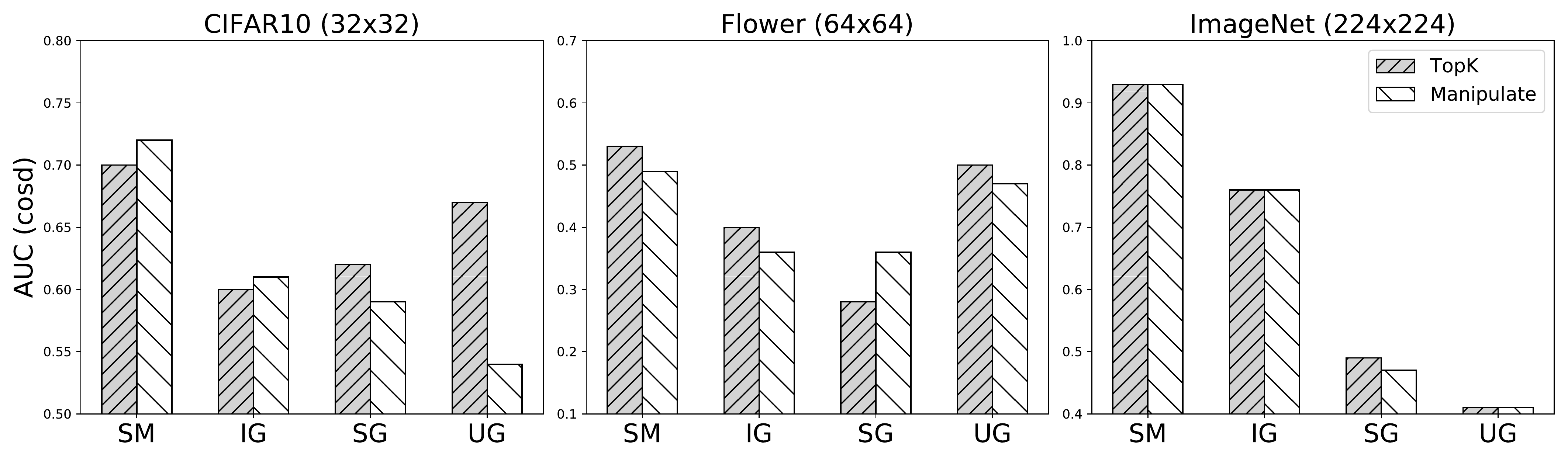}
	\caption{Evaluation with cosd of the top-k and manipulate attack on different dataset. We use $k=$ 20, 80 and 1000 pixels, respectively for CIFAR-10, Flower, and ImageNet to ensure the ratio of $k$ over the total number of pixels is approximately consistent across the dataset. Lower cosd indicates better attribution robustness. Each bar is computed by firstly taking the average scores over all evaluated images and then aggregating the results over different maximum allowed perturbation $\epsilon_\infty=2, 4, 8, 16$ with the area under the metric curve (AUC). We include the full results on other metrics in the Table~\ref{tab: standar model} from the Supplementary Material C.}
	\label{fig: standard_models}
\end{figure}

\begin{figure}[t]
	\centering
	\includegraphics[width=\textwidth]{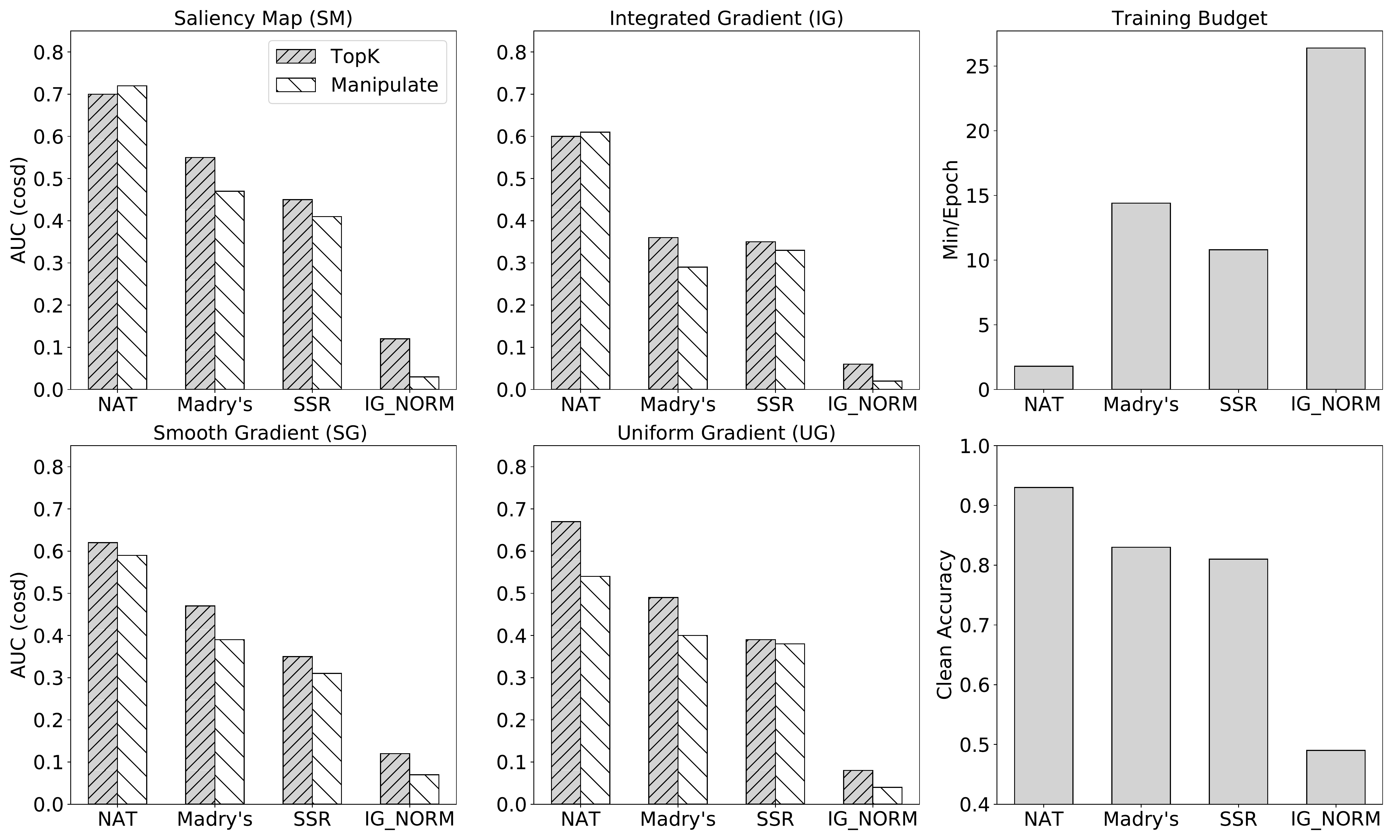}
	\caption{Evaluation with cosd of the top-k and manipulate attack on CIFAR-10 with different training algorithm. Lower cosd indicates better attribution robustness. The natural training is included in Table~\ref{tab: standar model}. We use $k=$ 20. Each number in the table is computed by firstly taking the average scores over all evaluated images and then aggregating the results over different maximum allowed perturbation $\epsilon_\infty=2, 4, 8, 16$ with the area under the metric curve shown in Fig.~\ref{fig: auc}. We include the full results on other metrics in the Table~\ref{tab: robust model} in the Supplementary Material C.}
	\label{fig: ssr_vs_baseline}
\end{figure}




\subsection{Experiment III: Transferability}
\label{exp: 3}

Given the nature of attribution attack is to find an adversarial example whose local geometry is significantly different from the input, there is a likelihood that an adversarial example of Saliency Map can be a valid attack to Integrated Gradient, which does not impose any surface smoothing technique to the local geometry. We verify this hypothesis by transferring the adversarial perturbation found on Saliency Map to other attribution maps.
In Fig \ref{fig:transfer_a}, we
compare the Integrated Gradient for the original and perturbed input, where the perturbation is targeted on Saliency Map with \emph{manipulate attack} in a ball of $\delta_\infty=4$. The result shows that the attack is successfully transferred from Saliency Map to Integrated Gradient. The experiment on 200 images shown in Fig~\ref{tab:transfer} on ImageNet also shows that perturbation targeted on Saliency Map modifies Integrated Gradient in a more significant way than it does on Smooth Gradient or Uniform Gradient, which are motivated to smooth the local geometry. Therefore, empirical evaluations show that models with smoothed geometry have lower risk of being exposed to transfer attack. 


\begin{figure}[t]
	\label{fig:transfer}
	\begin{subfigure}{0.5\textwidth}
		\centering
		\includegraphics[width=\textwidth]{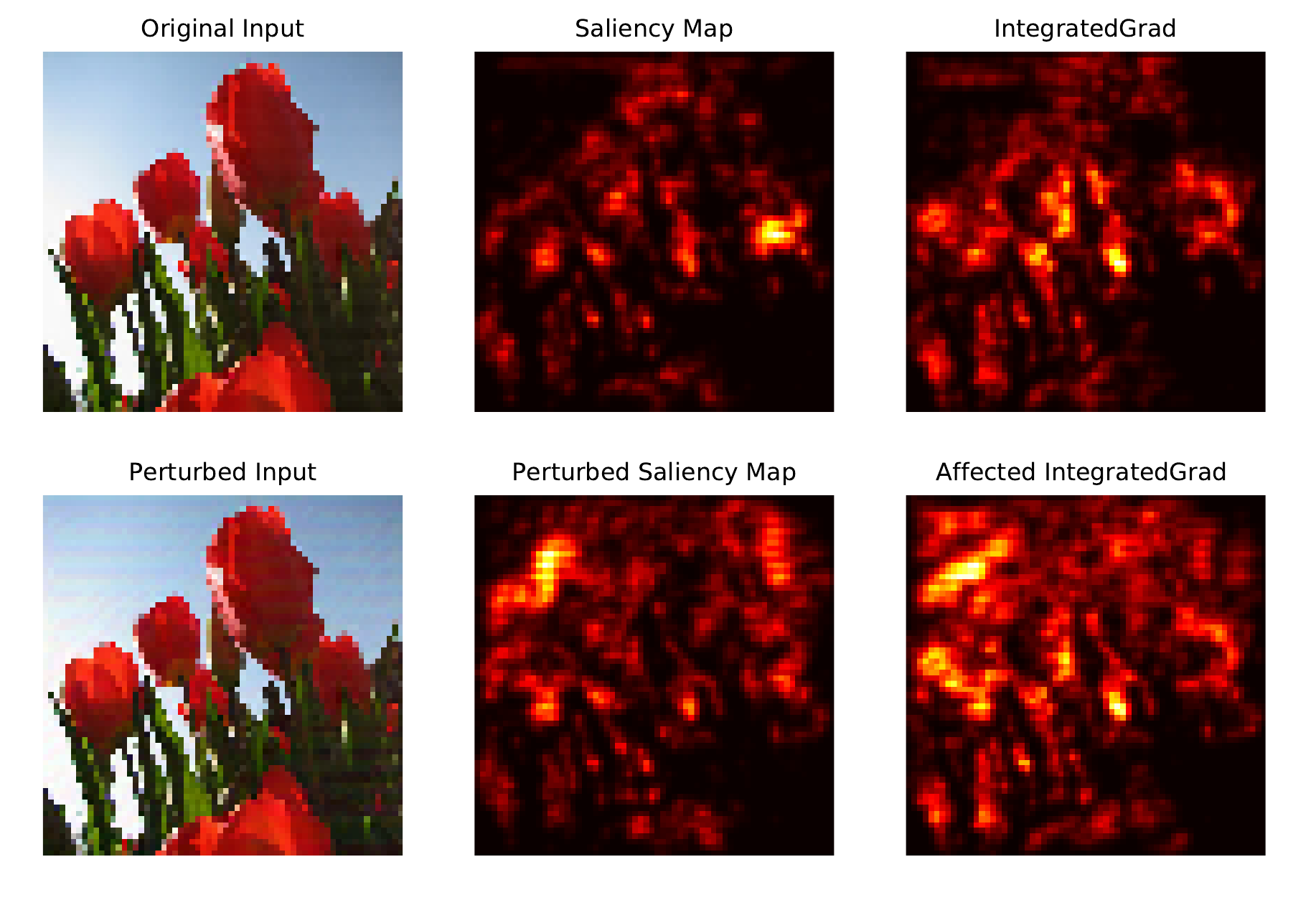}
		\caption{\pxm{text is small}}
		\label{fig:transfer_a}
	\end{subfigure}
	\begin{subfigure}{.5\textwidth}
		\centering
		\begin{tabular}{c|cccc|}
			\cline{2-5}
			& \multicolumn{4}{ c| }{TopK} \\ \cline{2-5}
			                             & SM    & IG    & SG   & UG   \\ \cline{1-5}
			\multicolumn{1}{ |c| }{k-in} & 0.73  & 1.01  & 1.42 & 1.54 \\ \cline{1-5}
			\multicolumn{1}{ |c| }{cor}  & 1.73  & 1.85  & 2.10 & 2.17 \\ \cline{1-5}
			\multicolumn{1}{ |c| }{cdl}  & 22.45 & 15.90 & 8.49 & 5.18 \\
			\cline{1-5}
			\multicolumn{1}{ |c| }{cosd} & 0.93  & 0.80  & 0.48 & 0.40 
			\\ \cline{1-5}
			& \multicolumn{4}{ c| }{Manipulate} \\ \cline{1-5}
			\multicolumn{1}{ |c| }{k-in} & 0.73  & 1.03  & 1.44 & 1.55 \\ \cline{1-5}
			\multicolumn{1}{ |c| }{cor}  & 1.73  & 1.87  & 2.12 & 2.17 \\ \cline{1-5}
			\multicolumn{1}{ |c| }{cdl}  & 22.27 & 15.62 & 8.60 & 4.97 \\
			\cline{1-5}
			\multicolumn{1}{ |c| }{cosd} & 0.93  & 0.77  & 0.47 & 0.40 
			\\ \cline{1-5}
									
		\end{tabular}
		\vspace{15pt}
		\caption{\pxm{information better served by figures? co dnsistency of labeling Saliency Map, S or SM?}}
		\label{tab:transfer}
	\end{subfigure}
	\caption{Transferability of attribution attacks. 
		(a) Example manipulation attack on Saliency Map has similar effect on Integrated Gradient. 
		(b) Effect on attribution maps under noise targeting Saliency Map \emph{only}, average over 200 images from ImageNet on ResNet50 with standard training.}
\end{figure}

\section{Related Work and Discussion}

\noindent\textbf{Characterization of Robustness}. In this paper, we characterize the local and global robustness of attributions with Lipchitz constants, where we find it similar to another metric $\text{SENS}_{\text{MAX}}$~\cite{yeh2019infidelity}. The major difference between our work from \citet{yeh2019infidelity} is that we are motivated to measure the performance of attributions under the threat of an adversary where $\text{SENS}_{\text{MAX}}$ is motivated to evaluate the sensitivity of an attribution. As a result, \citet{yeh2019infidelity} employ Monte Carlo sampling to find $\text{SENS}_{\text{MAX}}$ of an attribution map; however, an adversarial example may not be easily sampled, especially in the high-dimensional input space, like ImageNet. Another line of work to characterize the robustness of attribution map is to measure the similarity between the attribution and the input by \emph{alignment}, which is shown to be proportional to the robustness of model's prediction~\cite{etmann2019connection}. Towards robust attributions, \citet{singh2019benefits} propose soft margin loss to improve the \emph{alignment} for attributions. 

\noindent\textbf{From Robust Attribution To Robust Prediction}
Given a model with $(\lambda, \delta_2)$-local robust Saliency Map, $\lambda$ is proportional to $\max_{\mathbf{x}' \in B(\mathbf{x}, \delta_2) }||\triangledown_\mathbf{x}L - \triangledown_{\mathbf{x}'}L||_2$ by Def.~\ref{def: robustness}. Assume the maximization is achievable at $\mathbf{x}^*$, we have $\lambda \propto ||\triangledown_\mathbf{x}L - \triangledown_{\mathbf{x}^*}L||_2$. Triangle inequality offers $||\triangledown_\mathbf{x}L - \triangledown_{\mathbf{x}^*}L||_2 \leq ||\triangledown_\mathbf{x}L||_2 + ||\triangledown_{\mathbf{x}^*}L||_2 \leq 2||\triangledown_{\mathbf{x}^\dagger}L ||_2$ where  $\mathbf{x}^\dagger=\arg\max_{\mathbf{x}'\in B(\mathbf{x}, \delta_2}||\triangledown_{\mathbf{x}'}L ||_2$. The above analysis shows that penalizing $\max_{\mathbf{x}'\in B(\mathbf{x}, \delta_2)}||\triangledown_{\mathbf{x}'}L ||_2$ will lead to robust Saliency Map. We now show that the above penalty will also lead to robust prediction. Simon-Gabrie et al.~\cite{simongabriel2018firstorder} points out that training with $||\triangledown_\mathbf{x}L||_p$ penalty is a first-order Taylor approximation to include the adversarial data (e.g. Madry's Training~\cite{aleks2017deep}) within the $\ell_q$ ball where $p, q$ are dual norms, which has been empirically discovered as well~\cite{noack2019does}. The conclusion above implies that penalizing the gradient norm at each training point leads to the increasing robustness of predictions. Obviously, the gradient norm $||\triangledown_\mathbf{x}L||_p$ at each training point is upper-bounded by the local maximum norm $\max_{x' \in B(x, \delta_p)}||\triangledown_\mathbf{x}L||_p=||\triangledown_{\mathbf{x}^\dagger}L ||_2$ (if choosing $p=2$), the same target penalty term for training robust Saliency. Therefore, theoretically, improving the robustness of attribution can lead to increasing robustness of prediction. However, it is impossible for ReLU networks to run the inner-maximization due to the second-order derivatives being zeros, which is also the reason why we need Hessian approximation in this paper. Alternatively, one can replace ReLU with other second-order differentiable modules, e.g. Softplus, to run the min-max optimization, the intensive computation is probably still not generally affordable across the entire ML community. We include full discussion in the Supplementary Material B.

\noindent\textbf{From Robust Prediction To Robust Attribution.} In ReLU networks, provable robust predictions may lead to robust attributions, e.g. MMR~\cite{pmlr-v89-croce19a}. One benefit of ReLU networks is that the model behaves linearly within each activation polytobe~\cite{NIPS2019_9555}; therefore, Saliency Map within each polytobe becomes consistent. MMR increases the size of local linear regions by pushing the boundaries of activate polytobes away from each training point such that the model is locally linear in a bigger epsilon ball compared to the natural training. However, provable defenses against prediction attacks currently suffer from the lack of scalability to large models.

\noindent\textbf{Surface Smoothing}. Smoothing the geometry with Hessians has been widely adopted for different purpose. We use Singla et al’ s Closed-form Formula given that it is the first approach that returns Hessian approximation for ReLU networks~\cite{pmlr-v97-singla19a}. There are other approaches that approximate the Hessian's spectrum norm, e.g. by using Frobenius norm and the finite difference of Hessian-vector product~\cite{MoosaviDezfooli2019RobustnessVC} and by regularizing the spectrum norms of weights~\cite{yoshida2017spectral}.

\noindent\textbf{Other Post-hoc Approach.} An ad-hoc regularization can be an extra budget for people with pre-trained models. Except Smooth Gradient and Uniform Gradient discussed in the paper, \citet{alex2019certifiably} propose Sparsified SmoothGrad as a certified robust version of Smooth Gradient.
\section{Conclusion}\label{sec:conclusion}






We demonstrated that lack of robustness for gradient-based attribution methods can be characterized by Lipschitz continuity or smooth geometry, e.g. Smooth Gradient is more robust than Saliency Map and Integrated Gradient, both theoretically and empirically. 
We proposed \emph{Smooth Surface Regularization} to improve the robustness of all gradient-based attribution methods. The method is more efficient than existing output (\madry's training) and attribution robustness (IG-Norm) approaches, and applies to networks with ReLU.
We exemplified smoothing with \emph{Uniform Gradient}, a variant of Smooth Gradient with better robustness for some similarity metrics than Smooth Gradient while neither form of smoothing achieves best performance overall. This indicates future directions to investigate ideal smoothing parameters.
Our methods can be used both for training models robust in attribution (SSR) and for robustly explaining existing pre-trained models (UG). These tools extend a practitioner's transparency and explainability toolkit invaluable in especially high-stakes applications.

\section*{Acknowledgements}

This work was developed with the support of NSF grant CNS-1704845 as well as by DARPA and the Air Force Research Laboratory under agreement number FA8750-15-2-0277. The U.S. Government is authorized to reproduce and distribute reprints for Governmental purposes not withstanding any copyright notation thereon. The views, opinions, and/or findings expressed are those of the author(s) and should not be interpreted as representing the official views or policies of DARPA, the Air Force Research Laboratory, the National Science Foundation, or the U.S. Government. 


\section*{Broader Impact}

Our work is expected to have general positive broader impacts on the uses of machine learning in the broader society. Specifically, we are addressing the continual lack of transparency in deep learning and the potential of intentional abuse of systems employing deep learning. We hope that work such as ours will be used to build more trustworthy systems and make them more resilient to adversarial influence. As the impact of deep learning in general grows, so will the impact of transparency research such as ours. Depending on the use cases, such as work on algorithmic fairness, transparency tools such as hours can have positive impact on disadvantaged groups who either enjoy reduced benefits of machine learning or are susceptible to unfair decisioning from them. While any work in an adversarial setting can be misused as an instruction manual for defeating or subverting the proposed or similar methods, we believe the publication of the work is more directly useful in ways positive to the broader society. 


\medskip
\newpage
\bibliography{ref}
\bibliographystyle{plainnat}
\appendix






\newpage

\section*{Supplementary Material}
\subsection*{Supplymentary Material A}

\subsection*{A1. Proof of \textbf{Theorm 1}}
\noindent\textbf{Theorm 1} \textit{Given a model $f(\mathbf{x})$ is $(L, \delta_2)$-locally lipchitz continious in a ball $B(\mathbf{x}, \delta_2)$, then the Saliency Map is $(\lambda, \delta_2)$-local robustenss where $\lambda$ is $\mathcal{O}(L)$}.

\newcommand{\xstargrad}{\triangledown_{\mathbf{x}^*}f(\mathbf{x}^*)}
\newcommand{\xdaggergrad}{\triangledown_{\mathbf{x}^\dagger} f(\mathbf{x}^\dagger)}

\begin{proof}
We first introduce the following lemma.
\begin{lemma}[Lipschitz continuity and gradient norm~\cite{doi:10.1080}]
\label{lemma: Lipschitz continuity and gradient norm}
If a general function $h: \mathbb{R}^d \rightarrow \mathbb{R}$ is $L$-locally lipchitz continuous and continuously first-order differentiable in $B(\mathbf{x}, \delta_p)$, then 
\begin{align}
L = \max_{\mathbf{x}' \in B(\mathbf{x}, \delta_p)} ||\triangledown_\mathbf{x'}h(\mathbf{x}')||_q
\end{align}where $\frac{1}{p} + \frac{1}{q} = 1, 1\leq p, q, \leq \infty$.
\end{lemma}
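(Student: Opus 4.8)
The plan is to establish the stated equality as a two-sided inequality, reading $L$ as the \emph{tightest} (smallest) valid Lipschitz constant of $h$ on the ball $B(\mathbf{x}, \delta_p)$, since only under that reading can equality genuinely hold. Write $M \defeq \max_{\mathbf{x}' \in B(\mathbf{x}, \delta_p)} \norm{\triangledown_{\mathbf{x}'} h(\mathbf{x}')}_q$; this maximum is attained because $\triangledown h$ is continuous by hypothesis and the closed ball is compact. I would then prove $L \le M$ and $L \ge M$ separately.

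For the direction $L \le M$ (that $M$ is itself a valid Lipschitz constant), I would exploit that a norm ball is convex, so for any $\mathbf{a}, \mathbf{b} \in B(\mathbf{x}, \delta_p)$ the segment $\gamma(t) = \mathbf{a} + t(\mathbf{b}-\mathbf{a})$, $t\in[0,1]$, stays inside the ball. Applying the fundamental theorem of calculus to $t \mapsto h(\gamma(t))$ gives $h(\mathbf{b}) - h(\mathbf{a}) = \int_0^1 \langle \triangledown h(\gamma(t)), \mathbf{b} - \mathbf{a}\rangle\, dt$. Bounding the integrand with H\"older's inequality for the dual pair $(p,q)$ yields $|\langle \triangledown h(\gamma(t)), \mathbf{b}-\mathbf{a}\rangle| \le \norm{\triangledown h(\gamma(t))}_q \norm{\mathbf{b}-\mathbf{a}}_p \le M\norm{\mathbf{b}-\mathbf{a}}_p$, hence $|h(\mathbf{b})-h(\mathbf{a})| \le M\norm{\mathbf{b}-\mathbf{a}}_p$. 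Thus $M$ is a Lipschitz constant, and since $L$ is the smallest such constant, $L \le M$.

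For the reverse direction $L \ge M$, I would show that any Lipschitz constant dominates $M$ pointwise via directional derivatives. Fix $\mathbf{x}' \in B(\mathbf{x},\delta_p)$ and a unit direction $\mathbf{v}$ with $\norm{\mathbf{v}}_p = 1$; the $L$-Lipschitz bound gives $|h(\mathbf{x}' + t\mathbf{v}) - h(\mathbf{x}')| \le L|t|$ for small $t$, so dividing by $t$ and letting $t\to 0$ gives $|\langle \triangledown h(\mathbf{x}'), \mathbf{v}\rangle| \le L$. Taking the supremum over $\norm{\mathbf{v}}_p = 1$ and invoking the variational characterization of the dual norm, $\norm{\triangledown h(\mathbf{x}')}_q = \sup_{\norm{\mathbf{v}}_p = 1} \langle \triangledown h(\mathbf{x}'),\mathbf{v}\rangle \le L$; maximizing over $\mathbf{x}'$ then gives $M \le L$. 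Combining the two inequalities yields $L = M$.

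The main obstacle here is care rather than depth: I need convexity of the domain for the segment argument (automatic for norm balls), attainment of the maximum defining $M$ (which relies precisely on the continuity hypothesis for $\triangledown h$ together with compactness of the closed ball), and the correct pairing of dual exponents in both H\"older's inequality and the dual-norm characterization. The one genuinely interpretive point worth flagging is that the equality asserted by the lemma only makes sense when ``$L$-locally Lipschitz'' is read as $L$ being the \emph{optimal} constant; a merely feasible constant would deliver only $M \le L$.
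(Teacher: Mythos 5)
Your proof is correct, but there is nothing in the paper to compare it against: the paper never proves this lemma. It is stated with a citation to an external reference, and the appendix (A1) simply invokes it as a black box inside the proof of Theorem 1; the multivariate generalization (Lemma 4 in A3) is likewise attributed to prior work with the proof omitted. Your argument is the standard self-contained one: the upper bound $L \le M$ via the fundamental theorem of calculus along segments (using convexity of the ball) combined with H\"older's inequality for the dual pair $(p,q)$, and the lower bound $M \le L$ via directional difference quotients together with the variational characterization of the dual norm. What your route buys, beyond filling the gap, is a precise statement of when the lemma is actually true: as you flag, the asserted equality holds only if $L$ is read as the \emph{optimal} Lipschitz constant, a hypothesis the paper's phrasing leaves implicit; a merely feasible constant yields only $M \le L$. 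One small wrinkle worth patching: the difference-quotient argument requires $\mathbf{x}' + t\mathbf{v}$ to stay in the ball for small $t$, which fails in some directions when $\mathbf{x}'$ lies on the boundary; the clean fix is to establish $\left\lVert \triangledown h(\mathbf{x}') \right\rVert_q \le L$ for interior points and extend to the closed ball by continuity of $\triangledown h$, which your compactness hypothesis already supplies.
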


We start to prove Theorem 1. By Def. \ref{def: robustness}, we write the robustness of Saliency Map as
\begin{align}
    \lambda = \max_{\xvec'\in B} \frac{||\xgrad - \xcgrad||_2}{||\xvec-\xvec'||_2}
\end{align} Assume $x^* = \arg\max_{\xvec'\in B} \frac{||\xgrad - \xcgrad||_2}{||\xvec-\xvec'||_2}$ therefore

\begin{align}
   \lambda = \frac{||\xgrad - \xstargrad)||_2}{||\xvec-\xvec^*||_2} \leq \frac{||\xgrad||_2 + || \xstargrad)||_2}{||\xvec-\xvec^*||_2} \leq \frac{2||\xdaggergrad||_2}{||\xvec-\xvec^*||_2}
\end{align} where $\xvec^\dagger = \arg\max_{\xvec' \in B(\xvec, \delta_2)} ||\xcgrad||_2$. Since $f(\mathbf{x})$ is $(L, \delta_2)$-locally lipchitz continious, with Lemma \ref{lemma: Lipschitz continuity and gradient norm} and by choosing $p=2$, we have $L = || \xdaggergrad ||_2$. 


Therefore, we end the proof with
\begin{align}
\label{Eq: lambda_s}
    \lambda \leq \frac{2L}{||\xvec-\xvec^*||_2}
\end{align} which indicates $\lambda$ is proportional to $L$.
\end{proof}

\subsection*{A2. Proof of \textbf{Proposition 1}}
\noindent \textbf{Proposition 1}
\textit{
Given a model $ f(\mathbf{x})$ and we assume $F = \max_{\mathbf{x}\in B} |f(\mathbf{x})| < \infty$, and a user-defined noise level $\sigma$, the following equation holds for SmoothGrad:
$
    g(\mathbf{x}) = \mathbb{E}_{z\sim\mathcal{N}(\mathbf{x}, \sigma^2\mathbf{I})} \triangledown_\mathbf{z} f(\mathbf{z}) =  \triangledown_\mathbf{x} [(f * q) (\mathbf{x})]
$ where $q(\mathbf{x}) \sim \mathcal{N}(\mathbf{0}, \sigma^2\mathbf{I})$ and $*$ denotes the convolution.} 
\begin{proof}

The proof of Proposition 1 follows Bonnets' Theorem and Stein's Theorem which have been discussed by Lin et al.~\cite{lin2019steins}.  We first show that 

\begin{lemma}
l\label{lemma: grad smoothgrad}
Given a locally Lipschitz continuous function $f: \mathbb{R}^{d} \rightarrow \mathbb{R}$, and a Gaussian distribution $q(\mathbf{z}) \sim \mathcal{N}(\mathbf{x}, \sigma^2\mathbf{I})$, we have \begin{align}
    \mathbb{E}_q[\triangledown_\zvec h(\zvec)] = \mathbb{E}_q[(\sigma^2\mathbf{I})^{-1}(\zvec - \xvec)h(\zvec)]
\end{align}
\end{lemma}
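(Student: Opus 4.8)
The plan is to recognize Lemma~\ref{lemma: grad smoothgrad} as an instance of Stein's identity and to prove it by integration by parts against the Gaussian density. First I would write the expectation explicitly as $\mathbb{E}_q[\triangledown_\zvec h(\zvec)] = \int_{\mathbb{R}^d} \triangledown_\zvec h(\zvec)\, q(\zvec)\, d\zvec$, where the density is $q(\zvec) = (2\pi\sigma^2)^{-d/2}\exp\!\paren{-\norm{\zvec-\xvec}^2/(2\sigma^2)}$. The single fact that drives the whole argument is the \emph{score identity} for a Gaussian, namely $\triangledown_\zvec q(\zvec) = -\sigma^{-2}(\zvec-\xvec)\,q(\zvec)$. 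Rearranged, this says that the target right-hand side equals $\mathbb{E}_q[(\sigma^2\mathbf{I})^{-1}(\zvec-\xvec)h(\zvec)] = -\int_{\mathbb{R}^d} h(\zvec)\,\triangledown_\zvec q(\zvec)\, d\zvec$, so the lemma reduces to showing $\int \triangledown_\zvec h\, q\, d\zvec = -\int h\, \triangledown_\zvec q\, d\zvec$.

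Second, I would establish this equality coordinate-wise by integration by parts. Fixing a coordinate $i$ and integrating first in $z_i$, one gets $\int \partial_{z_i} h(\zvec)\, q(\zvec)\, d\zvec = -\int h(\zvec)\, \partial_{z_i} q(\zvec)\, d\zvec$ once the boundary term is shown to vanish. Substituting $\partial_{z_i} q(\zvec) = -\sigma^{-2}(z_i-x_i)q(\zvec)$ into the right-hand side yields $\int \partial_{z_i} h\, q\, d\zvec = \int \sigma^{-2}(z_i-x_i)\, h\, q\, d\zvec$, and stacking the $d$ coordinates into a vector gives exactly the claimed equation. This is the same computation underlying Bonnet's and Stein's theorems as used by Lin et al.~\cite{lin2019steins}.

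The step requiring genuine care --- the main obstacle --- is justifying both the vanishing boundary terms and the legitimacy of the integration by parts given only \emph{local} Lipschitz continuity of $h$. Here I would lean on the boundedness assumption $F = \sup_\xvec |f(\xvec)| < \infty$ available elsewhere in the paper together with the rapid decay of $q$ and $\triangledown_\zvec q$: since $|h|\le F$ and $\norm{\zvec}q(\zvec)\to 0$ faster than any polynomial grows, the boundary contributions at infinity vanish. Local Lipschitz continuity guarantees (by Rademacher's theorem) that $h$ is differentiable almost everywhere with a locally bounded gradient, so $h$ is absolutely continuous along almost every line and the one-dimensional integration by parts applies in the Lebesgue sense; a mollification or dominated-convergence argument then passes to the limit. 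Making this measure-theoretic justification rigorous, rather than appealing to classical integration by parts that would require $h \in C^1$, is the delicate part of the proof.
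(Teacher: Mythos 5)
Your proposal is correct and follows essentially the same route the paper intends: the paper does not prove this lemma itself but defers entirely to Lemma~5 of Lin et al.\ (Stein's identity), whose standard proof is exactly your argument --- the Gaussian score identity $\triangledown_\zvec q(\zvec) = -\sigma^{-2}(\zvec-\xvec)q(\zvec)$ combined with coordinate-wise integration by parts and vanishing boundary terms. If anything, your treatment is more careful than the source you are reconstructing, since you explicitly address why integration by parts is legitimate for a merely locally Lipschitz $h$ (Rademacher differentiability and absolute continuity along lines), a point the paper and its citation gloss over.
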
 The proof of Lemma 2 is described  by the Lemma 5 of Lin et al.~\cite{lin2019steins} and follow the proof of Theorem 3 of Lin et al.~\cite{lin2019steins} we show that 
\begin{align}
    \triangledown_{\xvec}\mathbb{E}_q[h(\zvec)] &= \int h(\zvec) \triangledown_\xvec \mathcal{N}(\zvec | \xvec, \sigma^2\mathbf{I}) d\zvec\\
    &=\int h(\zvec) (\sigma^2\mathbf{I})^{-1}(\zvec - \xvec) \mathcal{N}(\zvec | \xvec, \sigma^2\mathbf{I}) d\zvec \\
    &= \mathbb{E}_q[(\sigma^2\mathbf{I})^{-1}(\zvec - \xvec)h(\zvec)]
\end{align}
Therefore, we have $\mathbb{E}_q[\triangledown_\zvec h(\zvec)] = \triangledown_{\xvec}\mathbb{E}_q[h(\zvec)]$. Since 
\begin{align}
    \mathbb{E}_q[h(\zvec)] &= \int h(\zvec) \mathcal{N}(\zvec | \xvec, \sigma^2\mathbf{I}) d\zvec\\
    & = \int h(\zvec) q(\zvec-\xvec) d\zvec\\
    & = \int h(\zvec) q(\xvec-\zvec) d\zvec
\end{align} By the definition of convolution, $\mathbb{E}_q[h(\zvec)] = f * q$. Hence, we prove the proposition 1.


\end{proof}

\subsection*{A3. Proof of \textbf{Theorem 2}}
\noindent \textbf{Theorem 2}
\textit{
Given a model $ f(\mathbf{x})$ and we assume $|f(\mathbf{x})| < F < \infty$ in the input space, Smooth Gradient with a user-defined noise level $\sigma$ is $\lambda$-robustness globally where $\lambda \leq 2F / \sigma^2$}

To prove Theorem 2, we first introduce the following lemmas.

\begin{lemma}
\label{lemma: smooth hessian}
The input Hessian $\Tilde{\mathbf{H}}_\xvec$ of $\Tilde{f}_\sigma(\mathbf{x})$ is given by 
$
    \Tilde{\mathbf{H}}_\xvec = \frac{1}{\sigma^4}\mathbb{E}_p[(\zvec-\xvec)(\zvec-\xvec)^\top-\sigma^2\mathbf{I})f(\zvec)]
$
\end{lemma}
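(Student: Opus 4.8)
The plan is to compute the Hessian of the Gaussian-smoothed function $\Tilde{f}_\sigma(\xvec) = \mathbb{E}_{\zvec\sim\mathcal{N}(\xvec,\sigma^2\mathbf{I})}[f(\zvec)] = \int f(\zvec)\,\mathcal{N}(\zvec\mid\xvec,\sigma^2\mathbf{I})\,d\zvec$ by differentiating twice under the integral sign, pushing both derivatives onto the smooth Gaussian kernel rather than onto $f$. This is the second-order analogue of the Stein-type identity already established as Lemma~\ref{lemma: grad smoothgrad} (used in the proof of Proposition~\ref{lemma: gaussina conv}), and the payoff of the approach is that it requires no differentiability of $f$ whatsoever---only that $f$ be bounded (as assumed, $\sup_\xvec|f(\xvec)| = F < \infty$)---since every derivative falls on the rapidly-decaying Gaussian density.

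First I would record the first two derivatives of the kernel $g(\xvec) \defeq \mathcal{N}(\zvec\mid\xvec,\sigma^2\mathbf{I})$ with respect to $\xvec$. A direct computation gives $\triangledown_\xvec g = \tfrac{1}{\sigma^2}(\zvec-\xvec)\,g$, which is exactly the gradient identity behind Lemma~\ref{lemma: grad smoothgrad}. Differentiating this vector once more and substituting the gradient identity back in yields the kernel Hessian,
\begin{align}
\triangledown_\xvec\triangledown_\xvec^\top g = \frac{1}{\sigma^4}\big[(\zvec-\xvec)(\zvec-\xvec)^\top - \sigma^2\mathbf{I}\big]\,g,
\end{align}
where the $-\sigma^2\mathbf{I}$ term arises from differentiating the factor $(\zvec-\xvec)$ and the outer-product term from differentiating $g$ itself. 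This is the only genuine calculation in the proof, and it is elementary.

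Next I would differentiate $\Tilde{f}_\sigma$ under the integral sign twice and substitute the kernel Hessian, obtaining
\begin{align}
\Tilde{\mathbf{H}}_\xvec = \int f(\zvec)\,\triangledown_\xvec\triangledown_\xvec^\top g\,d\zvec = \frac{1}{\sigma^4}\,\mathbb{E}_{\zvec\sim\mathcal{N}(\xvec,\sigma^2\mathbf{I})}\big[\big((\zvec-\xvec)(\zvec-\xvec)^\top - \sigma^2\mathbf{I}\big)f(\zvec)\big],
\end{align}
which is the claimed formula.

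The main obstacle---really the only nontrivial point---is justifying the two interchanges of differentiation and integration. The plan is to invoke dominated convergence: the entries of $\triangledown_\xvec g$ and of $\triangledown_\xvec\triangledown_\xvec^\top g$ are the Gaussian density times a polynomial in $(\zvec-\xvec)$ of degree at most two, and multiplying by the bounded factor $f$ (using $|f|\le F$) leaves an integrand dominated, uniformly for $\xvec$ in a neighborhood, by $F$ times an integrable Gaussian-polynomial envelope. Since Gaussian moments of all orders are finite, dominated convergence applies and both exchanges are valid, completing the proof. I would also remark that this is precisely why $\Tilde{f}_\sigma$ is twice differentiable and admits a well-defined input Hessian even when $f$ (e.g.\ a ReLU network) is not, which is what makes the bound $\lambda \le 2F/\sigma^2$ of Theorem~\ref{thm: smoothgrad} meaningful.
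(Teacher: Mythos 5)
Your proof is correct and follows essentially the same route as the paper: the paper's own proof merely invokes the first-derivative (Stein/Bonnet-type) identity of its Lemma~2 and defers the second differentiation to a citation, and the computation being deferred to is exactly the Gaussian-kernel Hessian calculation you carry out explicitly, yielding the factor $\frac{1}{\sigma^4}\left[(\mathbf{z}-\mathbf{x})(\mathbf{z}-\mathbf{x})^\top-\sigma^2\mathbf{I}\right]$. Your version is in fact more complete than the paper's, since you also supply the dominated-convergence justification for the two interchanges of differentiation and integration (using only boundedness of $f$ and finiteness of Gaussian moments), which the paper omits entirely.
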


\begin{proof}
The first-order derivative of $\Tilde{f}_\sigma(\mathbf{x})$ has been offered by Lemma
\ref{lemma: grad smoothgrad}, with the help of which, we find the second-order derivatives following the proof in \cite{zhang2017hitting}.

\end{proof}

We then generalize Lemma \ref{lemma: Lipschitz continuity and gradient norm} to a multivariate output function.
\begin{lemma}
\label{lemma: extend gradient norm}
If a general function $h: \mathbb{R}^d \rightarrow \mathbb{R}^m$ is $L$-locally lipchitz continuous measured in $\ell_2$ space and continuously first-order differentiable in $B(\mathbf{x}, \delta_p)$, then 
\begin{align}
L = \max_{\mathbf{x}' \in B(\mathbf{x}, \delta_p)} ||\triangledown_\mathbf{x'}h(\mathbf{x}')||_*
\end{align} where $||\mathbf{M}||_* = \sup_{\{||\xvec||_2=1, \xvec\in\mathbb{R}^d\}} ||\mathbf{M}\xvec||_2$ is the spectral norm for arbitrary matrix $\mathbf{M} \in \mathbb{R}^{m\times d}$.
\end{lemma}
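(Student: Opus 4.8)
The plan is to generalize the two-sided argument behind Lemma~\ref{lemma: Lipschitz continuity and gradient norm}, replacing the $\ell_2$ gradient norm of a scalar field with the spectral norm of the Jacobian $\triangledown_{\xvec'}h(\xvec')\in\real^{m\times d}$. Reading $L$ as the optimal (smallest) local Lipschitz constant, exactly as in Lemma~\ref{lemma: Lipschitz continuity and gradient norm}, I would prove the claimed equality by establishing two inequalities: that this optimal constant is at most $\max_{\xvec'\in B}\norm{\triangledown_{\xvec'}h(\xvec')}_*$, and at least this quantity. Throughout I use that the ball $B(\xvec,\delta_p)$ is convex and that the Jacobian is continuous (since $h$ is continuously differentiable), so the maximum is attained on the closed ball by compactness.

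For the upper direction, I would take any two points $\mathbf{a},\mathbf{b}\in B$, parametrize the segment $\mathbf{r}(t)=\mathbf{a}+t(\mathbf{b}-\mathbf{a})$, which stays in $B$ by convexity, and apply the fundamental theorem of calculus componentwise to write $h(\mathbf{b})-h(\mathbf{a})=\int_0^1 \triangledown_{\mathbf{r}(t)}h(\mathbf{r}(t))\,(\mathbf{b}-\mathbf{a})\,dt$. Taking $\ell_2$ norms, pushing the norm inside the integral, and bounding $\norm{\triangledown h\cdot(\mathbf{b}-\mathbf{a})}_2\le\norm{\triangledown h}_*\,\norm{\mathbf{b}-\mathbf{a}}_2\le\max_{\xvec'\in B}\norm{\triangledown_{\xvec'}h(\xvec')}_*\,\norm{\mathbf{b}-\mathbf{a}}_2$ by the definition of the spectral norm shows that $h$ is Lipschitz with constant $\max_{\xvec'\in B}\norm{\triangledown_{\xvec'}h(\xvec')}_*$, so the optimal $L$ is no larger.

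For the lower direction, let $\xvec^*$ attain the maximum spectral norm and let $\mathbf{v}$ be a unit vector with $\norm{\triangledown_{\xvec^*}h(\xvec^*)\mathbf{v}}_2=\norm{\triangledown_{\xvec^*}h(\xvec^*)}_*$, choosing the sign of $\mathbf{v}$ so that $\xvec^*+t\mathbf{v}$ remains in $B$ for small $t>0$ (always possible, and harmless since $\mathbf{v}$ and $-\mathbf{v}$ give the same value). The $L$-Lipschitz hypothesis yields $\norm{h(\xvec^*+t\mathbf{v})-h(\xvec^*)}_2\le Lt$; dividing by $t$ and letting $t\to0$, the left side converges to the directional derivative $\norm{\triangledown_{\xvec^*}h(\xvec^*)\mathbf{v}}_2=\norm{\triangledown_{\xvec^*}h(\xvec^*)}_*$, giving $\max_{\xvec'\in B}\norm{\triangledown_{\xvec'}h(\xvec')}_*\le L$. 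Combining the two inequalities delivers the stated equality, and the scalar result (Lemma~\ref{lemma: Lipschitz continuity and gradient norm} with $p=q=2$) is recovered as the special case $m=1$, where the $1\times d$ Jacobian's spectral norm coincides with the $\ell_2$ norm of the gradient.

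The work here is bookkeeping rather than genuine difficulty: the two points that need care are justifying the interchange of the norm and the vector-valued integral (standard) and treating the boundary case in the lower direction, where $\xvec^*$ may lie on $\partial B$ so that only one sign of $\mathbf{v}$ keeps the perturbed point inside the ball. Both are routine, so I expect no substantive obstacle.
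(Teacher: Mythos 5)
Your argument is correct in substance, but note that it is not a reconstruction of anything in the paper: the paper omits the proof of this lemma entirely, deferring to Theorem~1 of Virmaux and Scaman~\cite{NIPS2018_7640}, so what you have written is the standard argument that the citation points to, spelled out in full. Both halves are the right ones --- the integral mean-value inequality along the segment, combined with $\|\mathbf{M}\mathbf{u}\|_2\le\|\mathbf{M}\|_*\|\mathbf{u}\|_2$, for the upper bound, and differentiation along a maximizing direction for the lower bound --- and together they establish the equality for the \emph{optimal} Lipschitz constant, which is the only reading under which an equality (rather than $L\ge\max_{\mathbf{x}'}\|\triangledown_{\mathbf{x}'}h(\mathbf{x}')\|_*$) can hold; your explicit adoption of that reading is appropriate. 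The one claim that is literally false is that a sign of $\mathbf{v}$ keeping $\mathbf{x}^*+t\mathbf{v}$ inside the ball can ``always'' be chosen: if $\mathbf{x}^*$ lies on the sphere bounding $B(\mathbf{x},\delta_2)$ and $\mathbf{v}$ is tangent to it there, then $\|\mathbf{x}^*\pm t\mathbf{v}-\mathbf{x}\|_2^2=\delta_2^2+t^2>\delta_2^2$ for every $t>0$, so neither sign keeps the perturbed point in the ball. This is a removable defect rather than a fatal one --- by continuity of the Jacobian the supremum of $\|\triangledown_{\mathbf{x}'}h(\mathbf{x}')\|_*$ over the interior equals that over the closure, and at interior points every direction is admissible for small $t$ --- but as written the lower-bound step has a hole exactly at boundary maximizers, which you flagged and then dismissed with an incorrect justification. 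With that one sentence repaired, your proof is complete and, unlike the paper, self-contained.
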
 The proof of Lemma \ref{lemma: extend gradient norm} is omitted since it is the repeat of Virmaux et al's~\cite{NIPS2018_7640} Theorem 1.

We now prove Theorem 2. The robustness coefficient $\Tilde{\lambda}$ of SmoothGrad is equivalent to the robustness of Saliency Map on $\Tilde{f}_\sigma(\mathbf{x})$ based on Lemma \ref{lemma: gaussina conv}. By Def. \ref{def: robustness}, the robustness of Saliency Map is just the local lipschitz smoothness constant, in the other word, the local lipschitz continuity constant of $\triangledown_{\xvec}\Tilde{f}_\sigma(\mathbf{x})$. Subsitute the$h(\xvec)$ in Lemma \ref{lemma: extend gradient norm} with $\triangledown_{\xvec}\Tilde{f}_\sigma(\mathbf{x})$, we have 
\begin{align}
    \Tilde{\lambda} = \max_{\mathbf{x}' \in B(\mathbf{x}, \delta_p)} ||\Tilde{\mathbf{H}}_{\xvec'}||_*
\end{align} Let $\xvec^\dagger = \arg\max_{\mathbf{x}' \in B(\mathbf{x}, \delta_p)} ||\Tilde{\mathbf{H}}_{\xvec'}||_*$,
\begin{align}
    \Tilde{\lambda} &= ||\Tilde{\mathbf{H}}_{\xvec^\dagger}||_* = ||\frac{1}{\sigma^4}\mathbb{E}_p[(\zvec-\xvec^\dagger)(\zvec-\xvec^\dagger)^\top-\sigma^2\mathbf{I})f(\zvec)]||_*\quad (\text{Lemma \ref{lemma: smooth hessian}})\\
    & \leq \frac{1}{\sigma^4}\{||\mathbb{E}_p[(\zvec-\xvec^\dagger)(\zvec-\xvec^\dagger)^\top f(\zvec)]||_* + \sigma^2    ||\mathbb{E}_p[f(\zvec)\mathbf{I}]||_*\} \\
    &\leq \frac{1}{\sigma^4} \{||\mathbb{E}_p[(\zvec-\xvec^\dagger)(\zvec-\xvec^\dagger)^\top |f(\zvec)|]||_* + \sigma^2    ||\mathbb{E}_p[|f(\zvec)|\mathbf{I}]||_*\} \\
    &\leq \frac{1}{\sigma^4}(F\sigma^2 + \sigma^2F)\\
    \Tilde{\lambda} &\leq\frac{2F}{\sigma^2}
\end{align}

\subsection*{A4: Proof of \textbf{Proposition} \ref{prop: compare SG with S}}

\noindent\textbf{Proposition} \ref{prop: compare SG with S}. \textit{Given a model $f(\xvec)$ is $(L, \delta_2)$-locally lipchitz continuous in the ball $B(\mathbf{x}, \delta_p)$ and assuming $\sup_{\xvec\in \mathbb{R}^d}|f(\xvec)| = F < \infty$. With a proper chosen noise level $\sigma >\sqrt{\delta_2F/L}$, the upper-bound of the local robustness of Smooth Gradient is always smaller than the upper-bound of the local robustness of Saliency Map. }

\begin{proof}
Assume Smooth Gradient is $(\lambda', \delta_2)$-local robust and $\Tilde{\lambda}$-global robust. Denote $B$. Since the global lipchitz constant is greater or equal to the local lipschitz constant~\cite{ANGELOS1986137} and by definition local robustness is the local lipchitz constant for the attribution map, we have
\begin{align}
  \lambda' \leq  \Tilde{\lambda} \leq \frac{2F}{\sigma^2}
\end{align}The second inequality is based on Theorem 2. We now start to prove the proposition. Given $\sigma >0, \sqrt{\delta_2F/L} > 0$, we have
\begin{align}
\label{Eq: sigma}
    \sigma^2 > \frac{F}{L}\delta_2
\end{align}
We consider a maximizer $\xvec^* \in B(\mathbf{x}, \delta_p)$ such that
\begin{align}
    \xvec^* &= \arg\max_{\xvec' \in B(\xvec, \delta_2)} \frac{|f(\xvec) - f(\xvec')|}{||\xvec - \xvec'||_2}
\end{align}Since $||\xvec - \xvec'|| \leq \delta_2$ for any $\xvec' \in B(\mathbf{x}, \delta_2)$, we have $||\xvec - \xvec^*||_2  \leq \delta_2$
We plug in this into Eq. (\ref{Eq: sigma}) 
\begin{align}
     \sigma^2 &> \frac{||\xvec - \xvec^*||_2}{L}F\\
    \frac{F}{\sigma^2} &\leq \frac{L}{||\xvec - \xvec^*||_2}\\
    \frac{2F}{\sigma^2} &\leq \frac{2L}{||\xvec - \xvec^*||_2}
\end{align}

Based on Equation (\ref{Eq: lambda_s}) in the proof of Theorem \ref{thm: robustenss of Saliency Map}, given a model $f(\xvec)$ is $(L, \delta_2)$-locally lipchitz continuous in the ball $B(\mathbf{x}, \delta_p)$, Saliency Map is $(\lambda, \delta_2)$ robustness and 
$\lambda \leq \frac{2L}{||\xvec-\xvec^*||_2}$ and Thereom \ref{thm: smoothgrad}, we prove the proporstion.
\end{proof}

\subsection*{A5. Proof of \textbf{Theorem 3}}

\noindent\textbf{Theorem 3} \textit{Given a twice-differentiable function $f:\mathbb{R}^{d_1}\rightarrow\mathbb{R}^{d_2}$, with the first-order Taylor approximation, $\max_{\xvec'\in B(\xvec, \delta_2)}||\triangledown_\xvec f(\xvec) - \triangledown_{\xvec'} f(\xvec') ||_2 \leq \delta_2\max_i|\xi_i|$ where $\xi_i$ is the $i$-th eigenvalue of the input Hessian $\mathbf{H}_\xvec$.}

\newcommand{\epsilonv}{\boldsymbol{\epsilon}}

\begin{proof}

\begin{align}
    \max_{\xvec'\in B(\xvec, \delta_2)}||\triangledown_\xvec f(\xvec) - \triangledown_{\xvec'} f(\xvec') ||_2 &\approx \max_{\xvec'\in B(\xvec, \delta_2)}||\mathbf{H}_\xvec(\xvec'-\xvec)||_2 \quad (\text{Taylor Expansion}) \\
    &=\max_{\xvec'\in B(\xvec, \delta_2)}||\mathbf{H}_\xvec||\xvec'-\xvec||_2\frac{\xvec'-\xvec}{||\xvec'-\xvec||_2}||_2\\
    &\leq\max_{\xvec'\in B(\xvec, \delta_2)}||\mathbf{H}_\xvec\delta_2\frac{\xvec'-\xvec}{||\xvec'-\xvec||_2}||_2\\
    &= \max_{||\epsilonv||_2= 1}\delta_2||\mathbf{H}_\xvec\epsilonv||_2 \quad (\text{Definition of Spectral Norm}) \\ 
    &=\delta_2\max_i|\xi_i|\\
\end{align}

\end{proof}

\subsection*{A6. Proof of Proposition 3} 
\textbf{Proposition 3}(Singla et al' s Closed-form Formula for Input Hessian) \textit{Given a ReLU network $f(\mathbf{x})$, the input Hessian of the loss can be approximated by $\mathbf{\Tilde{H}}_\mathbf{x} = W(\text{diag}(\mathbf{p})-\mathbf{p}^\top \mathbf{p})W^\top$, where $W$ is the Jacobian matrix of the logits vector w.r.t to the input and $\mathbf{p}$ is the probits of the model. $\text{diag}(\mathbf{p})$ is an identity matrix with its diagonal replaced with $\mathbf{p}$. $\mathbf{\Tilde{H}}_\mathbf{x}$ is positive semi-definite.}

\newcommand{\yvec}{\mathbf{y}}
\newcommand{\bvec}{\mathbf{b}}
\newcommand{\pvec}{\mathbf{p}}
\newcommand{\W}{\mathbf{W}}
\newcommand{\A}{\mathbf{A}}
\newcommand{\M}{\mathbf{M}}
\newcommand{\Hx}{\mathbf{H}_\xvec}
\newcommand{\dL}{\triangledown_\xvec \mathcal{L}}

\begin{proof}

The proof of Proposition 3 follows the proof of proposition 1 and Theorem 2 of Singla et al~\cite{pmlr-v97-singla19a}.

\noindent\textbf{Extra Notation} Let $\W_l$ be the weight matrix of $l$-th layer to the $l+1$-th layer and $\bvec_l$ be the bias of the $l$-th layer. Denote the ReLU activation as $\sigma(\cdot)$. We use $\hat{\yvec}$, $\pvec$ and $\yvec$ to represent the pre-softmax output, the pose-softmax probability distribution and the one-hot ground-truth label.

Firstly, we describe the local-linearity of a ReLU network. The pre-softmax output $\hat{\yvec}$ of a ReLU network can be written as 
\begin{align}
    \hat{\yvec} &= \sigma (\cdot\cdot\cdot \sigma(\W_2^\top\sigma(\W_1^\top\xvec + \bvec_1 ) + \bvec_2) \cdot\cdot\cdot)\\
    &=\W^\top\xvec+\bvec
\end{align} where each column $\W_i=\frac{\partial \hat{\yvec}_i}{\partial \xvec}$. Therefore, assume we use Cross-Entropy loss as the training loss, we can write the first-order gradient of the loss w.r.t input as 
\begin{align}
   \dL = \frac{\partial \mathcal{L}}{\partial \hat{\yvec}} \frac{\partial \hat{\yvec}}{\xvec} = \W(\pvec-\yvec)
\end{align}
Thirdly, we compute the input Hessian 

\begin{align}
    \Hx &= \triangledown^2_\xvec \mathcal{L} = \triangledown_\xvec(\W(\pvec-\yvec)) = \triangledown_\xvec(\sum_i\W_i(\pvec_i-\yvec_i))\\
    &= \sum_i\W_i \triangledown_\xvec(\pvec_i-\yvec_i)\\
    & = \sum_i\W_i (\triangledown_\xvec\pvec_i)^\top\\
    &= \sum_i\W_i (\sum_j \frac{\partial \pvec_j}{\partial \hat{\yvec}_j}\frac{\partial \hat{\yvec}_j}{\partial \xvec_j})^\top\\
    & = \sum_i\W_i (\sum_j \frac{\partial \pvec_j}{\partial \hat{\yvec}_j}\W_j)^\top\\
    &=\sum_i\sum_j\W_i \frac{\partial \pvec_j}{\partial \hat{\yvec}_j}^\top\W^\top_j\\
    &=\W\A \W^\top
\end{align} where $\A = \frac{\partial \pvec_j}{\partial \hat{\yvec}_j}^\top = \text{diag}(\pvec) - \pvec\pvec^\top $ and $\text{diag}(\pvec)$ denotes a matrix with $\pvec$ as the diagonal entries and 0 otherwise. We use $\Tilde{\Hx}$ to denote $\W\A \W^\top$, the input Hessian of the Cross-Entropy loss. Finally,  we show $\Tilde{\Hx}$ is semi-positive definite (PSD). The basic idea is to show $\A$ is PSD and using Cholesky decompostion we show $\Tilde{\Hx}$ is PSD as well. To show $\A$ is PSD, we consider 
\begin{align}
    \sum_{j\neq i}|\A_{ij}| = \sum_{j\neq i}|-\pvec_i\pvec_j| = \pvec_i\sum_{j\neq i}\pvec_j = \pvec_i(1-\pvec_i) > 0
\end{align}
The diagonal entries $|\A_{ij}| = \pvec_i(1-\pvec_i)$. With Gershgorim Circle theorem, all eigenvalues of $\A$ is positive, so $\A$ is PSD. Therefore, we can find Cholesky decomposition of $\A$ such that $\A = \M\M^\top$. Then $\Tilde{\Hx} = \W\M\M^\top\W^\top = \W\M(\W\M)^\top$, which means the Cholesky decomposition of $\Tilde{\Hx}$ exists, so $\Tilde{\Hx}$ is PSD as well. 

\end{proof}

\section*{Supplementary Material B}
\subsection*{B1. Eigenvalue Computation of $\Tilde{\Hx}$}

\newcommand{\B}{\mathbf{B}}
\newcommand{\U}{\mathbf{U}}
\newcommand{\V}{\mathbf{V}}
\newcommand{\Sigmam}{\boldsymbol{\Sigma}}

In this section, we discuss the computation of eigenvalues of the input Hessian in SSR defined in Def. \ref{def: SSR}. As pointed out by Supplymentary Material A6, we can write 
\begin{align}
    \Tilde{\Hx} = \W\M(\W\M)^\top = \B\B^\top
\end{align} where $\B = \W\M$. Let the SVD of $\B$ be $\U\Sigmam\V$, so that 
\begin{align}
    \B\B^\top = \U \Sigmam^2 \U^\top
\end{align} Note that $\B^\top\B = \V \Sigmam^2 \V^\top$ whose singular values are identical to $\Tilde{\Hx}$ and the dimension of $\B^\top\B$ is $c \times c$ where $c$ is the number of classes. Given $\Tilde{\Hx}$ is PSD, so the singular values and eigenvalues coincide.  Therefore,  we can compute the eigenvalues of $\B^\top\B$ instead of running eigen-decomposition on $\Tilde{\Hx}$ directly.

\subsection*{B2. Adversarial training and robustness of attribution}
In the Sec.~\ref{sec: Towards Robust Attribution} we discuss the connection between the robustness of prediction and the robustness of attributions with $\max_{\xvec'\in B(\xvec, \delta_2)}||\triangledown_\xvec\mathcal{L}||_2$ with the first-order Talyer Expansion. In this section, we also look into the second-order term. Firstly, we denote $\Delta \mathcal{L} = \max_{||\boldsymbol{\epsilon}||\leq\delta_2}| \mathcal{L}((\mathbf{x}+\boldsymbol{\epsilon}, y) - \mathcal{L}((\mathbf{x}, y)|$. With the second-order Taylor expansion, we can write 
\begin{align}
   \mathcal{L}(\mathbf{x}+\boldsymbol{\epsilon}, y) &\approx \mathcal{L}+ \triangledown_\xvec \mathcal{L}^\top \boldsymbol{\epsilon} + \frac{1}{2}\boldsymbol{\epsilon}^\top \mathbf{H}_\xvec{\epsilon}
\end{align} Therefore, we have
\begin{align}
   \Delta \mathcal{L} &\approx  \max_{||\boldsymbol{\epsilon}||\leq\delta_2} |\triangledown_\xvec \mathcal{L}^\top \boldsymbol{\epsilon} + \frac{1}{2}\boldsymbol{\epsilon}^\top \mathbf{H}_\xvec{\epsilon}|\\
   &\leq \max_{||\boldsymbol{\epsilon}||\leq\delta_2} |\triangledown_\xvec \mathcal{L}^\top \boldsymbol{\epsilon}| + \max_{||\boldsymbol{\epsilon}||\leq\delta_2} | \frac{1}{2}\boldsymbol{\epsilon}^\top \mathbf{H}_\xvec\boldsymbol{\epsilon}|
\end{align}

Simon-Gabriel et al.~\cite{simongabriel2018firstorder} demonstrates that the first term $\max_{||\boldsymbol{\epsilon}||\leq\delta_2} |\triangledown_\xvec \mathcal{L}^\top \boldsymbol{\epsilon}| = \delta_2||\triangledown_\xvec \mathcal{L}||_2$, we now focus on the second term 
\begin{align}
\max_{||\boldsymbol{\epsilon}||\leq\delta_2} | \frac{1}{2}\boldsymbol{\epsilon}^\top \mathbf{H}_\xvec\boldsymbol{\epsilon}| &= \max_{ ||\boldsymbol{\epsilon}||\leq\delta_2} \frac{||\boldsymbol{\epsilon}||^2_2}{2} |\frac{\boldsymbol{\epsilon}^\top}{||\boldsymbol{\epsilon}||_2} \mathbf{H}_\xvec\frac{\boldsymbol{\epsilon}}{||\boldsymbol{\epsilon}||_2}|\\
&\leq \frac{\delta^2_2}{2} \max_{ ||\boldsymbol{\epsilon}||=1}
|\boldsymbol{\epsilon}^\top \mathbf{H}_\xvec\boldsymbol{\epsilon}|
\end{align}

With Rayleigh quotient, we have $\max_{ ||\boldsymbol{\epsilon}||=1}
|\boldsymbol{\epsilon}^\top \mathbf{H}_\xvec\boldsymbol{\epsilon}| = \max_i |\xi_i|$ where $\xi_i$ is the $i$-th eigenvalue of the Hessian. However, as mentioned by Simon-Gabriel et al.~\cite{simongabriel2018firstorder}, the higher-order term has very limited contribution to the upper-bound of $\Delta \mathcal{L}$ compared with the first-order term. Therefore, with second-order Taylor's approximation, adversarial training optimizes a lower-bound of the sum of the gradient norm and the largest eigenvalue of input hessian.




\section*{Supplementary Material C}

\subsection*{C1. Implementation Details for Experiments}

\noindent\textbf{Attribution Methods.}We use the predicted class as the quantity of interest for all attribution methods. For IG, SG, and UG, we use 50 samples to approximate the expectation. For IG, we use the zero baseline input. For SG, we use the noise standard deviation $\sigma=\texttt{ratio} \times (u_x - l_x)$ where $u_x$ is the maximum pixel value of the input and $l_x$ is the minimum pixel value of the input and the noise ratio $\texttt{ratio}=0.1$ for CIFAR and Flower, $\texttt{ratio}=0.2$ for ImageNet. For UG, we use $r=4$ as the noise radius for data range of [0, 255] in CIFAR and Flower and $r=0.2 \times (u_x - l_x)$ for ImageNet. We employ bigger noise levels in ImageNet for both SG and UG because empirically we find higher noise levels produce better visualizations (see Fig.~\ref{fig: UG noise level}) in dataset with such high dimensions. 

\noindent\textbf{Evaluation Metrics.} Let $\zvec$ and $\zvec'$ be the original and perturbed attribution maps, respectively, attribution attacks are evaluated with following metrics: 
\begin{itemize}
    \item \textbf{Top-k Intersection (k-in)} measures the intersection between features with top-k attribution scores in the original and perturbed attribution map: $\sum_{i\in K} \texttt{n}(\zvec')_i$ where $\texttt{n}(\xvec)=|\xvec|/\sum^d_j|\xvec_j|$ and $K$ is the set of k-largest dimensions of $\texttt{n}(\zvec)$~\cite{Ghorbani2017InterpretationON}.\\
    \item \textbf{Spearman's rank-order correlation (cor)}~\cite{spearman04} compares the rank orders of $\zvec$ and $\zvec'$ as features with higher rank in the attribution map are often interpreted as more important.\footnote{we use the implementation on https://docs.scipy.org} \\
    \item \textbf{Mass Center Dislocation (cdl)} measures the spatial displacement of
the "center" of attribution scores by $\sum^d_i[\zvec_i - \zvec'_i]i$~\cite{Ghorbani2017InterpretationON}.\\
\item \textbf{Cosine Distance (cosd)} measures the change of directions bewteen attribution maps by $1-\langle \zvec, \zvec'\rangle / ||\zvec||_2 ||\zvec'||_2$. 
\end{itemize}

\noindent\textbf{Attribution Attacks.} To implement the attribution attack, we adapt the release code\footnote{code is availabe on https://github.com/amiratag/InterpretationFragility} by~\cite{Ghorbani2017InterpretationON} and we make the following changes:
\begin{enumerate}
    \item We change the clipping function to projection to bound the norm of the total perturbation. 
    \item We use \texttt{grad} $\times$ \texttt{input} for Saliency Map, Smooth Gradient, and Uniform Gradient.
    \item For the manipulate attack, we use the default parameters $\beta_0=1e11$ and $\beta_1=1e6$ in the original paper. 
\end{enumerate}

For all experiments and all attribution attacks, we run the attack for 50 iterations.

\noindent\textbf{IG-NORM Regularization.}
We use the following parameters to run IG-NORM regularization which are default parameters in the release code\footnote{https://github.com/jfc43/robust-attribution-regularization}. We use Adam in the training.

\begin{table}[h]
\begin{center}
 \begin{tabular}{||c c c c c c c c||} 
 \hline
 \texttt{epochs} & \texttt{batch\_size} & $\epsilon_\infty$ & $\gamma$ & \texttt{nbiter} & \texttt{m} & \texttt{approx\_factor} &\texttt{step\_size}  \\ [0.5ex] 
 \hline
 50 & 16 & 8/255 & 0.1 &7 &50& 10 & 2/255 \\ 
 \hline

\end{tabular}
\end{center}
\caption{Hyper-parameters used in IG-NORM training}
\end{table} where 
\begin{itemize}
    \item \texttt{epochs}: the number of epochs in the training.\\
    \item \texttt{batch\_size}: size of each mini-batch in the gradient descent.\\
    \item $\epsilon_\infty$: maximum allowed perturbation of the the input to run the inner-maximization \\
    \item $\gamma$: penalty level of the IG loss. \\
    \item \texttt{nbiter}: the number of iterations used to approximate the inner-maximization of the IG-loss\\
    \item \texttt{m}: the number of samples used to approximate the path integral of IG.\\
    \item \texttt{approx\_factor}: the actual samples used to approximate IG is \texttt{m}/\texttt{approx\_factor}\\
    \item \texttt{step\_size}: the size of each PGD iteration.
    
\end{itemize}

\subsection*{C1.1 Full Experiments}
Full results of attribution attack for different attribution methods and training methods are show in Table~\ref{tab: standar model} and \ref{tab: robust model}. We include the preliminary results of variability experiments in Fig~\ref{fig: var_saliency} for Saliency Map.Given the long-period of training and testing, full results are still under processing and will be released in the future. We include the preliminary results from other $\epsilon$ for adversarial training in Fig.~\ref{fig: pgd_sweep}. We use the data range in [0, 255] and $\epsilon$ is calculated under $\ell_2$ norms. We notice that when changing the data range to [0, 1], adversarial training can provide much better robustness in attribution attack as well. However, in this paper, given SSR is also trained on data range of [0, 255], we only compare the results on [0, 255] in this section. We are working on providing more comprehensive and detailed comparison in the future versions.

\begin{table}[t!]
\centering
\hspace{-10pt}
\begin{tabular}{cc|cccc||cccc|}
\cline{3-10}
& & \multicolumn{4}{ c|| }{TopK Attack} &  \multicolumn{4}{ c| }{Manipulate Attack} \\ \cline{3-10}
& & SM & IG & SG & UG& SM & IG & SG & UG \\ \cline{1-10}
\multicolumn{1}{ |c  }{\multirow{4}{*}{\shortstack[l]{CIFAR-10\\ $32\times32$}} } &
\multicolumn{1}{ |c| }{k-in} & 0.51 & 0.84 & 0.75 & \textbf{2.94} &  0.49 & 0.81 & 0.85 & \textbf{2.95}    \\ \cline{2-10}
\multicolumn{1}{ |c  }{}                        &
\multicolumn{1}{ |c| }{cor} & 1.85 & \textbf{1.96} & 1.67 & 1.85 &  1.82 & \textbf{1.95} & 1.70 & 1.89      \\ \cline{2-10}
\multicolumn{1}{ |c  }{}                        &
\multicolumn{1}{ |c| }{cdl} & 2.93 & 3.05 & \textbf{1.97} & 3.08 &  2.93 & 2.83 & \textbf{1.83} & 2.98    \\
\cline{2-10}
\multicolumn{1}{ |c  }{}                        &
\multicolumn{1}{ |c| }{cosd} & 0.70 & \textbf{0.60}        & 0.62 & 0.67 &  0.72 & 0.61 & 0.59 & \textbf{0.54}   \\\cline{1-10}
\multicolumn{1}{ |c  }{\multirow{4}{*}{\shortstack[l]{Flower\\ $64\times64$}} } &
\multicolumn{1}{ |c| }{k-in} & 0.97 & 1.39 & 1.72 & \textbf{2.00} &  1.08 & 1.51 & 1.48 & \textbf{2.03}   \\ \cline{2-10}
\multicolumn{1}{ |c  }{}                        &
\multicolumn{1}{ |c| }{cor} & 2.26 & \textbf{2.41} & 2.31 & 2.24 &  2.27 & \textbf{2.43} & 2.24 & 2.26     \\ \cline{2-10}
\multicolumn{1}{ |c  }{}                        &
\multicolumn{1}{ |c| }{cdl} & 4.10 & 3.80 & \textbf{1.61} & 4.21 &  3.88 & 3.35 & \textbf{2.47} & 4.37  \\
\cline{2-10}
\multicolumn{1}{ |c  }{}                        &
\multicolumn{1}{ |c| }{cor} & 0.53 & 0.40 & \textbf{0.28} & 0.50 &  0.49 & \textbf{0.36} & \textbf{0.36} & 0.47     \\ \cline{1-10}
\multicolumn{1}{ |c  }{\multirow{4}{*}{\shortstack[l]{ImageNet\\ $224\times224$}} } &
\multicolumn{1}{ |c| }{k-in} & 0.73 & 1.05 & 1.38 & \textbf{1.52}  & 0.72 & 1.02 & 1.45 & \textbf{1.50}    \\ \cline{2-10}
\multicolumn{1}{ |c  }{}                        &
\multicolumn{1}{ |c| }{cor} & 1.73 & 1.90 & 2.08 & \textbf{2.15} & 1.73 & 1.89 & 2.10 & \textbf{2.14}    \\ \cline{2-10}
\multicolumn{1}{ |c  }{}                        &
\multicolumn{1}{ |c| }{cdl} & 22.45 & 14.17 & 9.74 & \textbf{5.81}  & 21.98 & 14.36 & 8.26 & \textbf{5.48}    \\
\cline{2-10}
\multicolumn{1}{ |c  }{}                        &
\multicolumn{1}{ |c| }{cosd} & 0.93 & 0.76 & 0.49 & \textbf{0.41} & 0.93 & 0.76 & 0.47 & \textbf{0.41}    \\\cline{1-10}

\end{tabular}
\vspace{10pt}
\caption{Evaluation of the top-k and manipulate attack on different dataset. We use $k=$ 20, 80 and 1000 pixels, respectively for CIFAR-10, Flower, and ImageNet to ensure the ratio of $k$ over the total number of pixels (in the first column) is approximately consistent across the dataset. Each number in the table is computed by firstly taking the average scores over all evaluated images and then aggregating the results over different maximum allowed perturbation $\epsilon_\infty=2, 4, 8, 16$ with the area under the metric curve. The bold font identifies the most robust method under each metric for each dataset.}
\label{tab: standar model}
\end{table}

\begin{table}[t!]
\centering
\hspace{-10pt}
\begin{tabular}{ccc|cccc||cccc|}
\cline{4-11}
& & & \multicolumn{4}{ c|| }{TopK Attack} & \multicolumn{4}{ c| }{Manipulate Attack} \\ \cline{4-11}
& & & SM & IG & SG & UG& SM & IG & SG & UG \\ \cline{1-11}
\multicolumn{1}{ |c  }{\multirow{4}{*}{\shortstack[l]{SSR (Ours)\\ $\beta=0.3$}} } &
\multicolumn{1}{ |c| }{time} & \multicolumn{1}{ |c| }{k-in}&\textbf{0.68} & \textbf{1.05} & \textbf{1.04} & 2.95 & 0.80 & 1.18 & \textbf{1.26} & 2.95    \\ \cline{2-11}
\multicolumn{1}{ |c  }{}                        &
\multicolumn{1}{ |c| }{0.18h/e}&\multicolumn{1}{ |c| }{cor} &\textbf{2.21} & \textbf{2.37} & \textbf{2.15} & \textbf{2.26} & \textbf{2.24} & 2.40 & \textbf{2.21} & \textbf{2.29}     \\ \cline{2-11}
\multicolumn{1}{ |c  }{}                        &
\multicolumn{1}{ |c| }{acc.} &\multicolumn{1}{ |c| }{cdl} &\textbf{2.54} & 2.24 & \textbf{1.77} & \textbf{2.50} & \textbf{2.25} & \textbf{1.98} & \textbf{1.41} & \textbf{1.48}     \\
\cline{2-11}
\multicolumn{1}{ |c  }{}                        &
\multicolumn{1}{ |c| }{81.2\%} & \multicolumn{1}{ |c| }{cosd} &\textbf{0.45} & \textbf{0.35} & \textbf{0.35} & \textbf{0.39} & \textbf{0.41} & 0.33 & \textbf{0.31} & \textbf{0.38}     \\ \cline{1-11}
\multicolumn{1}{ |c  }{\multirow{4}{*}{\shortstack[l]{\madry's~\cite{aleks2017deep}\\ $\delta_2=0.25$}} } &
\multicolumn{1}{ |c| }{time} & \multicolumn{1}{ |c| }{k-in}&0.43 &1.03  & 0.94 & \textbf{2.95} &  \textbf{1.04} & \textbf{1.67} & 1.14 & \textbf{2.96}    \\ \cline{2-11}
\multicolumn{1}{ |c  }{}                        &
\multicolumn{1}{ |c| }{0.24h/e} &\multicolumn{1}{ |c| }{cor} &2.01 & 2.30 & 2.01 & 2.04 &  2.15 & \textbf{2.48} & 2.04 & 2.28    \\ \cline{2-11}
\multicolumn{1}{ |c  }{}                        &
\multicolumn{1}{ |c| }{acc.} &\multicolumn{1}{ |c| }{cdl} &3.09 & \textbf{2.20} & 1.84 & 3.08 &  4.76 & 3.26 & 1.75 & 3.59   \\
\cline{2-11}
\multicolumn{1}{ |c  }{}                        &
\multicolumn{1}{ |c| }{82.9\%} &\multicolumn{1}{ |c| }{cosd} &0.55 & 0.36 & 0.47 & 0.49 &  0.47 & \textbf{0.29} & 0.39 & 0.40  
\\ \cline{1-11}
\multicolumn{1}{ |c  }{\multirow{4}{*}{\shortstack[l]{IG-NORM~\cite{chen2019robust}\\ $\gamma=0.1$}} } &
\multicolumn{1}{ |c| }{time} &\multicolumn{1}{ |c| }{k-in} &1.55 & 1.99 & 1.70 & 2.96 &  2.56 & 2.74 & 2.15 & 2.98    \\ \cline{2-11}
\multicolumn{1}{ |c  }{}                        &
\multicolumn{1}{ |c| }{0.44h/e} &\multicolumn{1}{ |c| }{cor} & 2.75 & 2.86 & 2.73 & 2.78 &  2.91 & 2.95 & 2.80 & 2.89\\ \cline{2-11}
\multicolumn{1}{ |c  }{}                        &
\multicolumn{1}{ |c| }{acc.} &\multicolumn{1}{ |c| }{cdl} &1.25 & 0.91 & 1.18 & 1.22 &  1.51 & 1.18 & 0.96 & 1.48     \\
\cline{2-11}
\multicolumn{1}{ |c  }{}                        &
\multicolumn{1}{ |c| }{49.5\%} & \multicolumn{1}{ |c| }{cosd} &0.12 & 0.06 & 0.12 & 0.08 &  0.03 & 0.02 & 0.07 & 0.04    
\\\cline{1-11}


\end{tabular}
\vspace{10pt}
\caption{Evaluation of the top-k and manipulate attack on CIFAR-10 with different training algorithm. The natural training is included in Table~\ref{tab: standar model}. We use $k=$ 20. Each number in the table is computed by firstly taking the average scores over all evaluated images and then aggregating the results over different maximum allowed perturbation $\epsilon_\infty=2, 4, 8, 16$ with the area under the metric curve shown in Fig.~\ref{fig: auc}. The bold font highlights the better one between \madry's training and SSR. Per-epoch training time (time) and training accuracies (acc.) are listed on the second column.}
\label{tab: robust model}
\end{table}

\begin{figure}
    \centering
    \includegraphics[width=\textwidth]{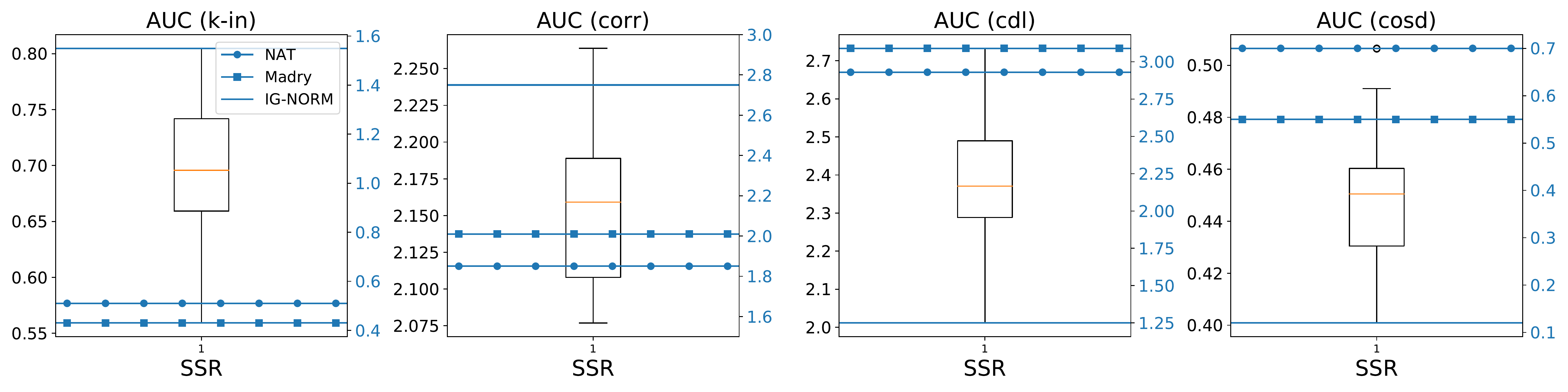}
    \caption{Variability experiments of SSR on Saliency Map (using the left y-axis). We trained 11 ResNet-20 models with $\beta=0.3$ on CIFAR-10. The results of NAT, Madry and IG-NORM (using the right y-axis) are read from Table~\ref{tab: standar model} and~\ref{tab: robust model}.}
    \label{fig: var_saliency}
\end{figure}

\begin{figure}
    \centering
    \includegraphics[width=\textwidth]{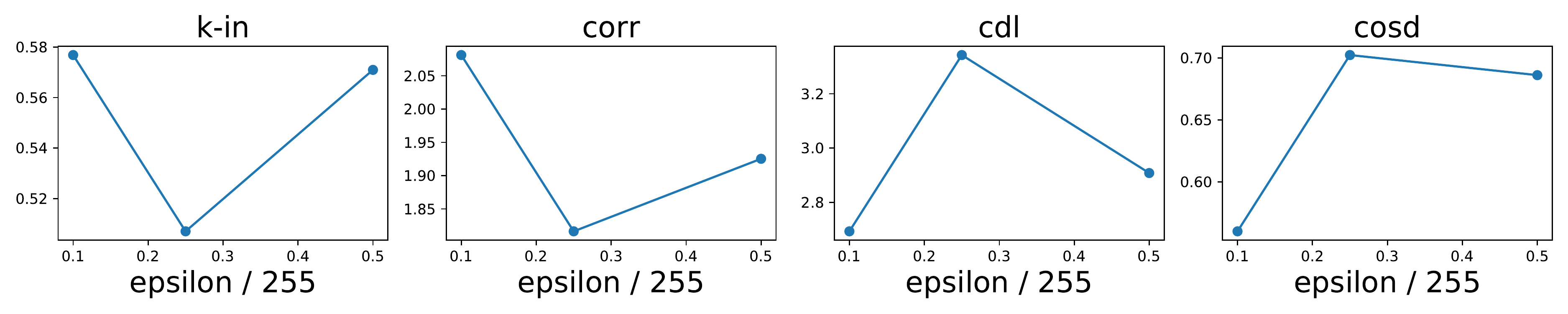}
    \caption{Evaluation of attribution attack on adversarially trained ResNet20 with $\ell_2$ balls.}
    \label{fig: pgd_sweep}
\end{figure}

\subsection*{C2. Sensitivity of Hyper-parameters in SSR}

\noindent \textbf{choice of scaling $s$.} We choose $s=1e6$ from empirical tests for CIFAR-10 with data range in [0, 255]. We notice if data range [0, 1] is used, $s=1$ is a good scaling factor. A proper $s$ can be found simply be setting $s=1$ and $\beta=1$ first to observe the scale of the regularization. The data range and dimensions of the input determines the choice of the scaling parameter $s$.

\noindent \textbf{choice of $\beta$.} We run a simple parameter seach for $\beta = 0.3, 0.5, 0.7, 0.9$. We train all the models on CIFAR-10 with 25 epochs and record the training accuracy in Tabel~\ref{table: acc vs beta} and we run the \emph{topk} attack on each model respectively on same 500 images where the results are shown in Fig.~\ref{fig: beta v.s AUC}. Higher $\beta$ will require more training time to reach better performance and it may not necessarily produce better robustness than a small $\beta$ on some metrics, e.g. top-k intersection. Therefore, for the consideration of training time and robustness performance, we choose $\beta=0.3$ in the Experiment II of Sec.~\ref{sec: exp}.

\begin{table}[h]
\begin{center}
 \begin{tabular}{||c c c c c||} 
 \hline
$\beta$&$0.3$ & $0.5$ & $0.7$ & $0.9$ \\ [0.5ex] 
 \hline
\texttt{train acc.} & 0.81 & 0.78 & 0.70 & 0.62 \\ 
 \hline

\end{tabular}
\end{center}
\caption{Training accuracies v.s. $\beta$ in SSR on CIFAR-10 ($s=1e6$). }
\label{table: acc vs beta}
\end{table}

\begin{figure}[h]
  \centering
  \begin{subfigure}[b]{0.45\textwidth}
    \begin{adjustbox}{width=\linewidth} 
\begin{tikzpicture}[scale=0.5]
    \begin{axis}[
        xlabel=$\beta$,
        ylabel=AUC of k-in,
        legend style={at={(1,0.85)},anchor=north east},
    ]
    
      \addplot plot coordinates {
        (0.3,     0.68)
        (0.5,    0.61)
        (0.7,    0.67)
        (0.9,   0.51)
    };

    \addplot plot coordinates {
        (0.3,     1.05)
        (0.5,    0.88)
        (0.7,    0.85)
        (0.9,   0.65)
    };

    \addplot plot coordinates {
        (0.3,     1.04)
        (0.5,    0.98)
        (0.7,    1.09)
        (0.9,   0.90)
    };

    \addplot plot coordinates {
        (0.3,     2.95)
        (0.5,    2.95)
        (0.7,    2.94)
        (0.9,   2.94)
    };
    \legend{SM\\IG\\SG\\UG\\}
    
    \end{axis}
\end{tikzpicture}
 \end{adjustbox}       %
\end{subfigure}%
   \hfill
  \begin{subfigure}[b]{0.45\textwidth}
    \begin{adjustbox}{width=\linewidth} 
\begin{tikzpicture}[scale=0.5]
    \begin{axis}[
        xlabel=$\beta$,
        ylabel=AUC of cor,
        legend style={at={(1,0.85)},anchor=north east},
    ]
    
      \addplot plot coordinates {
        (0.3,     2.21)
        (0.5,    2.10)
        (0.7,    2.08)
        (0.9,   2.06)
    };

    \addplot plot coordinates {
        (0.3,     2.37)
        (0.5,    2.22)
        (0.7,    2.14)
        (0.9,   2.09)
    };

    \addplot plot coordinates {
        (0.3,     2.15)
        (0.5,    2.10)
        (0.7,    2.12)
        (0.9,   1.99)
    };

    \addplot plot coordinates {
        (0.3,     2.26)
        (0.5,    2.10)
        (0.7,    2.06)
        (0.9,   1.92)
    };
    
    \end{axis}
\end{tikzpicture}
 \end{adjustbox}       %
\end{subfigure}\\%

  \begin{subfigure}[b]{0.45\textwidth}
    \begin{adjustbox}{width=\linewidth} 
\begin{tikzpicture}[scale=0.5]
    \begin{axis}[
        xlabel=$\beta$,
        ylabel=AUC of cdl,
        legend style={at={(1,0.85)},anchor=north east},
    ]
    
      \addplot plot coordinates {
        (0.3,     2.54)
        (0.5,    2.46)
        (0.7,    2.45)
        (0.9,   2.62)
    };

    \addplot plot coordinates {
        (0.3,     2.24)
        (0.5,    2.30)
        (0.7,    2.32)
        (0.9,   2.69)
    };

    \addplot plot coordinates {
        (0.3,     1.77)
        (0.5,    1.67)
        (0.7,    1.81)
        (0.9,   1.89)
    };

    \addplot plot coordinates {
        (0.3,     2.50)
        (0.5,    2.66)
        (0.7,    2.65)
        (0.9,   3.07)
    };
    
    \end{axis}
\end{tikzpicture}
 \end{adjustbox}       %
\end{subfigure}%
   \hfill
  \begin{subfigure}[b]{0.45\textwidth}
    \begin{adjustbox}{width=\linewidth} 
\begin{tikzpicture}[scale=0.5]
    \begin{axis}[
        xlabel=$\beta$,
        ylabel=AUC of cosd,
        legend style={at={(1,0.85)},anchor=north east},
    ]
    
      \addplot plot coordinates {
        (0.3,     0.41)
        (0.5,    0.49)
        (0.7,    0.51)
        (0.9,   0.53)
    };

    \addplot plot coordinates {
        (0.3,     0.33)
        (0.5,    0.42)
        (0.7,    0.46)
        (0.9,   0.50)
    };

    \addplot plot coordinates {
        (0.3,     0.31)
        (0.5,    0.38)
        (0.7,    0.37)
        (0.9,   0.43)
    };

    \addplot plot coordinates {
        (0.3,     0.38)
        (0.5,    0.45)
        (0.7,    0.47)
        (0.9,   0.56)
    };
    
    \end{axis}
\end{tikzpicture}
 \end{adjustbox}       %
\end{subfigure}
\caption{Attribution attack on ResNet-20 models with different hyper-parameter $\beta$ and the identical scaling term $s=1e6$. The y-axis is the AUC score for each metric over $\epsilon_\infty=2, 4, 8, 16$ as described in Fig~\ref{fig: auc}. }
\label{fig: beta v.s AUC}
\end{figure}
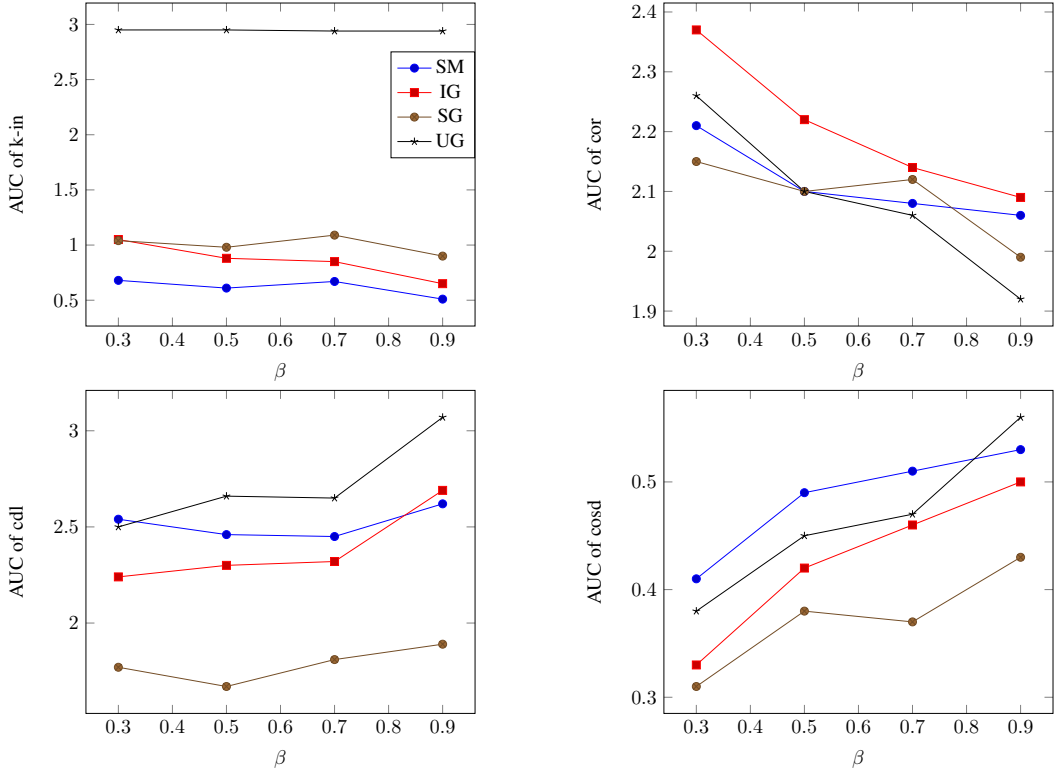

\subsection*{C3. Extra Experiment of Attribution Attack with Adversarial Training}

Adversarial training on ImageNet usually takes a long time with limited resources. Therefore, we only investigate how robust attribution maps are on a pre-trained ResNet-50\footnote{we use the released weight file from https://github.com/MadryLab/robustness} model and the results are shown in Table~\ref{tab: Linf 8 robust model}. It shows that if a user has enough time and GPU resources, using adversarial training can also produce considerably good robustness on gradient-based attributions. 

\begin{table}[h]
\centering
\hspace{-10pt}
\begin{tabular}{c|cccc||cccc|}
\cline{2-9}
 & \multicolumn{4}{ c|| }{TopK} 
 & \multicolumn{4}{ c|| }{Manipulate}
 \\ \cline{2-9}
& SM & IG & SG & UG & SM & IG & SG & UG  \\ \cline{1-9}
\multicolumn{1}{ |c| }{k-in} & 2.22 & 2.55 & 2.30 & 2.23 & 2.20 & 2.42 & 2.32 & 2.22  \\ \cline{1-9}
\multicolumn{1}{ |c| }{cor} & 2.69 & 2.87 & 2.72 & 2.71 & 2.69 & 2.82 & 2.73 & 2.70     \\ \cline{1-9}
\multicolumn{1}{ |c| }{cdl} & 5.70 & 2.89 & 5.55 & 5.75 & 5.69 & 3.27 & 5.23 & 5.69   \\
\cline{1-9}
\multicolumn{1}{ |c| }{cosd} & 0.31 & 0.18 & 0.22 & 0.21  & 0.32 & 0.22 & 0.22 & 0.22  
\\ \cline{1-9}
\end{tabular}

\vspace{10pt}
\caption{Robust model (adversarially trained with $\epsilon_\infty=8 $) evaluation of the top-k,manipulate and mass center attack on ImageNet dataset. We use $k=$ 1000 pixels. Each number in the table is computed by firstly taking the average scores over all evaluated images and then aggregating the results over different maximum allowed perturbation $\epsilon_\infty=2, 4, 8, 16$ with the area under the metric curve. The bold font identifies the most robust method under each metric.}
\label{tab: Linf 8 robust model}
\end{table}

\subsection*{C4. Extra Experiments of Transfer Attack}

\noindent \textbf{Setup of Experiments.} We describe the detailed setup for the Experiment III of Sec.~\ref{sec: exp}. Experiments are conducted on 200 images from ImageNet dataset. A pre-trained\footnote{we use the released weight file from https://github.com/MadryLab/robustness} ResNet-50 model with standard training is used. We first generate perturbed images by attacking Saliency Map, then we evaluate the difference between all attribution maps on the original and perturbed images. The number of iterations in the original saliency map attack is 50, number of steps for IG,SG and UG is 50. Top-K intersection (K=1000), correlation, mass center dislocation and cosine distance are used to measure the difference. Each number in the table is computed by firstly taking the average scores over all evaluated images and then aggregating the results over different maximum allowed perturbation $\epsilon_\infty=2, 4, 8, 16$ with the area under the metric curve.

In the Experiment III of Sec.~\ref{sec: exp}, we show the transferability of attribution attack. We here further show the transfer attack with \emph{mass center} attack of Saliency Map on all other attribution methods in Table~\ref{tab:transfer with mass center}. Compared with Smooth Gradient and Uniform Gradient, Integtrated Gradient also has larger dissimilarity on all metrics. 

\begin{table}[h]
    \centering
 \begin{tabular}{c|cccc|}
    \cline{2-5}
     & \multicolumn{4}{ c| }{Mass center} \\ \cline{2-5}
    & SM & IG & SG & UG \\ \cline{1-5}
    \multicolumn{1}{ |c| }{k-in} & 0.71 & 1.01 & 1.44 & 1.54    \\ \cline{1-5}
    \multicolumn{1}{ |c| }{cor} & 1.73 & 1.87 & 2.12 & 2.17    \\ \cline{1-5}
    \multicolumn{1}{ |c| }{cdl} & 24.41 & 17.34 & 8.99 & 5.09  \\
    \cline{1-5}
    \multicolumn{1}{ |c| }{cosd} & 1.01 & 0.79 & 0.47 & 0.41   
    \\ \cline{1-5}
    \end{tabular}
\vspace{15pt}
\caption{Transferability evaluation of mass center attack. We attack on the Saliency Map (SM) and evaluate the difference between all attribution maps on the original and perturbed images.}
    \label{tab:transfer with mass center}
\end{table}

\pgfplotsset{width=6cm,compat=1.9}

\section*{Supplementary Material D}
\subsection*{D1. Visual Comparison}

In this section, besides the robustness, we compare the visualizaiton of Uniform Gradient with several existing methods shown in Fig~\ref{fig: visualization of UG}. The visualization shows that Uniform Gradient is also able to visually denoise the original Saliency Map.

\begin{figure}[t]
  \centering
  \includegraphics[width=\textwidth]{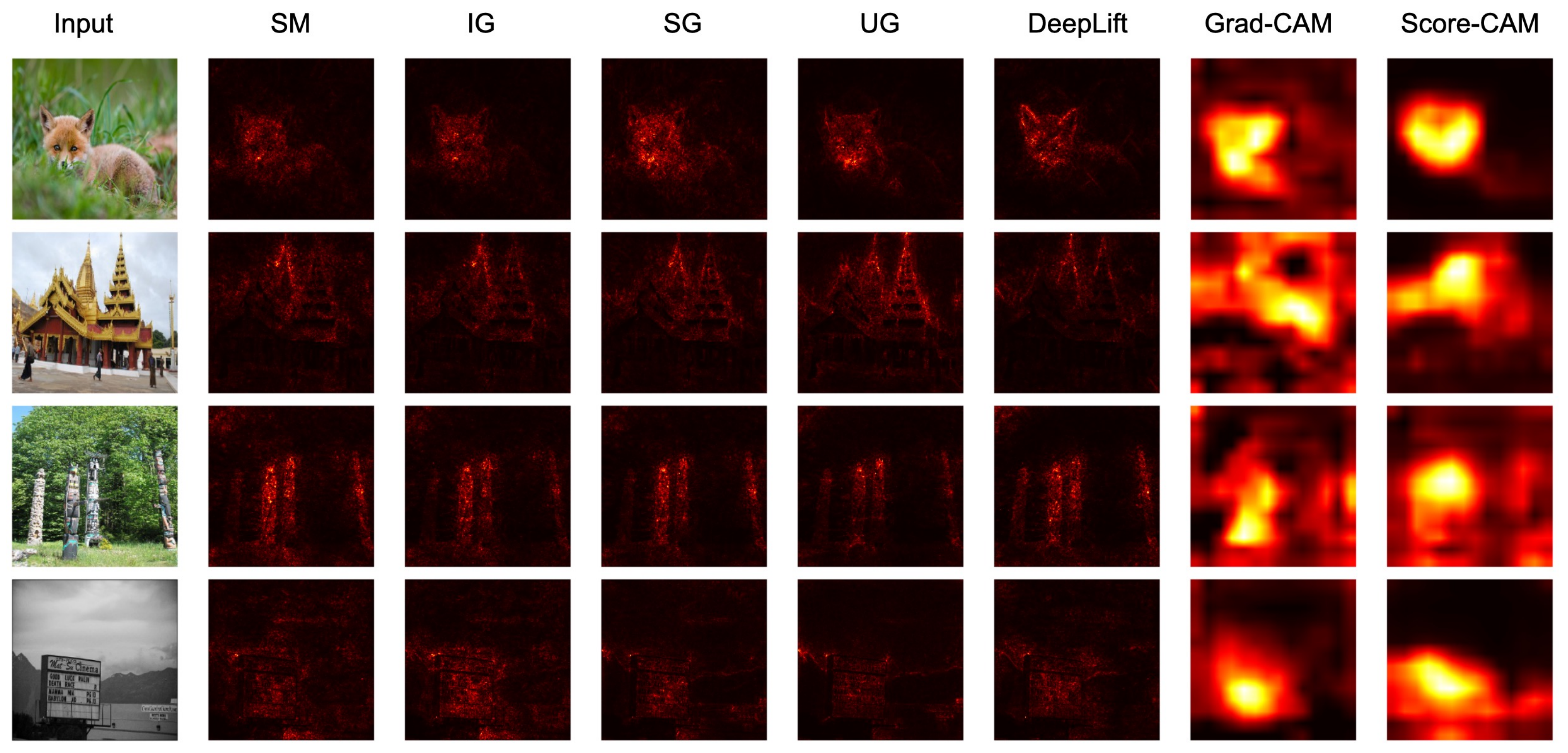}
  \includegraphics[width=\textwidth]{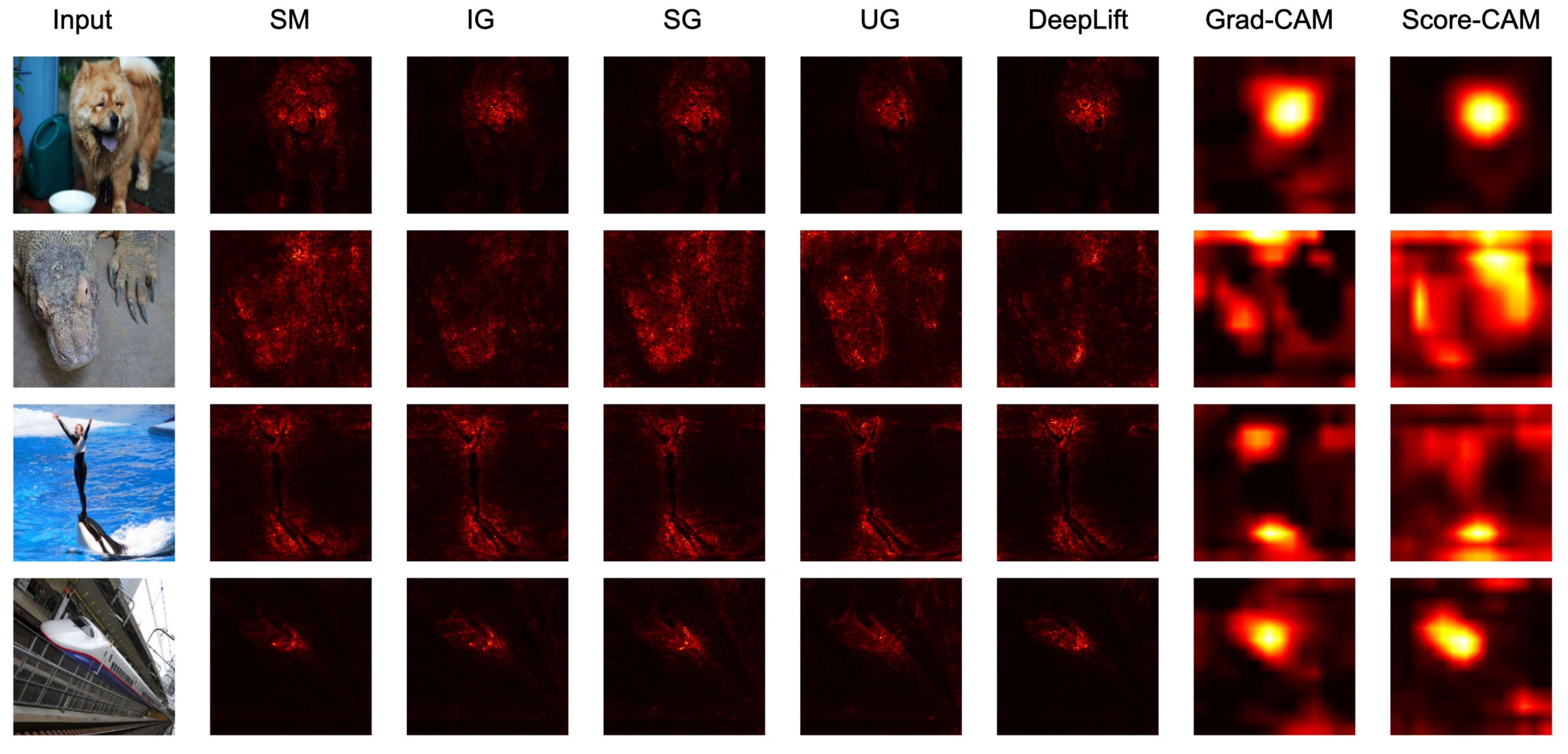}
  \caption{Visualization results of 7 baseline methods.}
  \label{fig: visualization of UG}
\end{figure}

\subsection*{D2: UniGrad under different smoothing radius}

Choosing the noise radius $r$ is also a hyper-parameter tuning process. We provide visualization of the same input with different noise radius from 2 to 64 under 0-255 scale with 50 times sampling to approximate the expectation in Fig~\ref{fig: UG noise level}. When the noise radius is too low, it can not denoise the Saliency Map while if the noise radius is too high, the attribution map becomes "too dark".

\begin{figure}[!t]
  \centering
  \includegraphics[width=\textwidth]{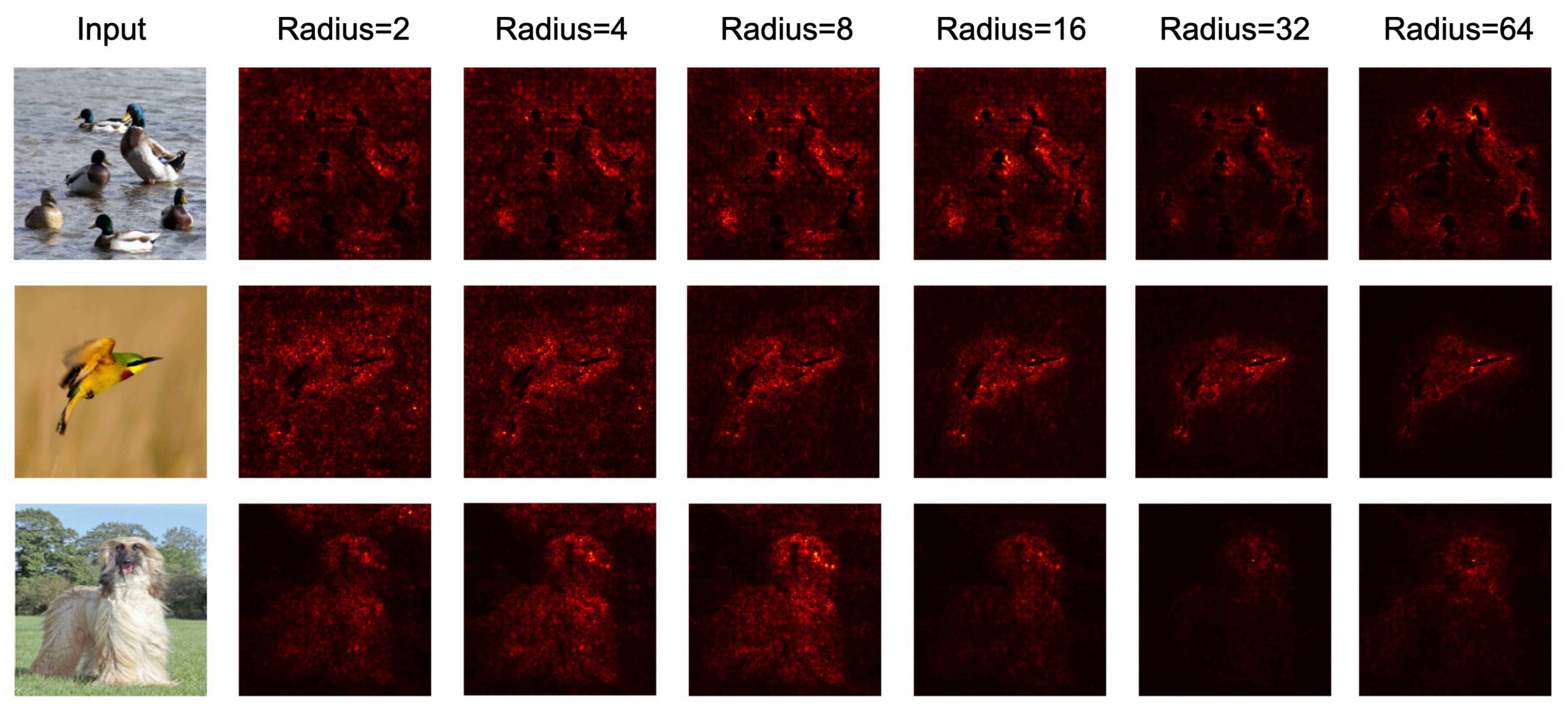}
  \caption{Visualization results of Uniform Gradient with different noise radius.}
  \label{fig: UG noise level}
\end{figure}

\end{document}